\documentclass{article}




\usepackage[final]{neurips_2022}


\usepackage[utf8]{inputenc} 
\usepackage[T1]{fontenc}    
\usepackage{hyperref}       
\usepackage{url}            
\usepackage{booktabs}       
\usepackage{amsfonts}       
\usepackage{nicefrac}       
\usepackage{microtype}      
\usepackage{xcolor}         

\usepackage{algorithm}
\usepackage{algorithmic}

\usepackage{microtype}
\usepackage{graphicx}
\usepackage{subfigure}

\usepackage{enumitem}

\usepackage{amsmath}
\usepackage{amssymb}
\usepackage{mathtools}
\usepackage{amsthm}

\usepackage[capitalize,noabbrev]{cleveref}

\theoremstyle{plain}
\newtheorem{theorem}{Theorem}[section]

\newtheorem{lemma}[theorem]{Lemma}

\theoremstyle{definition}
\newtheorem{definition}[theorem]{Definition}

\theoremstyle{remark}

\usepackage[textsize=tiny]{todonotes}

\def\old {{\text{old}}}

\def\prox {{\text{prox}}}

\def\inner {{\text{in}}}
\def\outer {{\text{out}}}

\DeclareMathOperator*{\argmin}{arg\,min}
\def\sigmoid {{\text{sigmoid}}}
\def\R {\mathbb{R}}
\def\E {\mathbb{E}}
\def\S {\mathcal{S}}
\def\vtheta {\boldsymbol\theta}

\usepackage{epsdice}
\usepackage{bbm}

\title{Proximal Learning With Opponent-Learning Awareness}

%

\author{%
  Stephen Zhao \\
  University of Toronto and Vector Institute\\
  \texttt{stephen.zhao@mail.utoronto.ca} \\
   \And
  Chris Lu \\
  FLAIR, University of Oxford \\
  \texttt{christopher.lu@exeter.ox.ac.uk} \\
  \And
  Roger Grosse \\
  University of Toronto and Vector Institute\\
  \texttt{rgrosse@cs.toronto.edu} \\
  \And
  Jakob Foerster \\
  FLAIR, University of Oxford \\
  \texttt{jakob.foerster@eng.ox.ac.uk} \\
}

\begin{document}

\maketitle

\begin{abstract}

Learning With Opponent-Learning Awareness (LOLA) (\cite{foerster2018learning}) is a multi-agent reinforcement learning algorithm that typically learns reciprocity-based cooperation in partially competitive environments.
However, LOLA often fails to learn such behaviour on more complex policy spaces parameterized by neural networks, partly because the update rule is sensitive to the policy parameterization. 
This problem is especially pronounced in the opponent modeling setting, where the opponent's policy is unknown and must be inferred from observations; in such settings, LOLA is ill-specified because behaviourally equivalent opponent policies can result in non-equivalent updates.
To address this shortcoming, we reinterpret LOLA as approximating a proximal operator, and then derive a new algorithm, proximal LOLA (POLA), which uses the proximal formulation directly. 
Unlike LOLA, the POLA updates are \textit{parameterization invariant}, in the sense that when the proximal objective has a unique optimum, behaviourally equivalent policies result in behaviourally equivalent updates.
We then present practical approximations to the ideal POLA update, which we evaluate in several partially competitive environments with function approximation and opponent modeling. This empirically demonstrates that POLA achieves reciprocity-based cooperation more reliably than LOLA.

\end{abstract}

\section{Introduction}


As autonomous learning agents become more integrated into society, it is increasingly important to ensure these agents' interactions produce socially beneficial outcomes, i.e. those with high total reward. One step in this direction is ensuring agents are able to navigate social dilemmas \citep{dawes1980social}.
\citet{foerster2018learning} showed that simple applications of independent reinforcement learning (RL) to social dilemmas usually result in suboptimal outcomes from a social welfare perspective. To address this, they introduce \textit{Learning With Opponent-Learning Awareness} (LOLA), which actively shapes the learning step of other agents. LOLA with tabular policies learns \textit{tit-for-tat} (TFT), i.e. reciprocity-based cooperation, in the iterated prisoner's dilemma (IPD), a 2-agent social dilemma.

However, for the same IPD setting, we show that LOLA with policies parameterized by neural networks often converges to the Pareto suboptimal equilibrium of unconditional defection. This shows that the learning outcome for LOLA is highly dependent on \textit{policy parameterization}, one factor that makes LOLA difficult to scale to higher dimensional settings. This problem is especially pronounced in the \textit{opponent modeling} setting, where the opponent's policy is \textit{unknown} and must be inferred from observations; in such settings, LOLA is ill-specified because behaviourally equivalent opponent policies can result in non-equivalent updates and hence learning outcomes.

\begin{figure}[ht]
\vskip -0.1in
\begin{center}
\centerline{\includegraphics[scale=0.55]{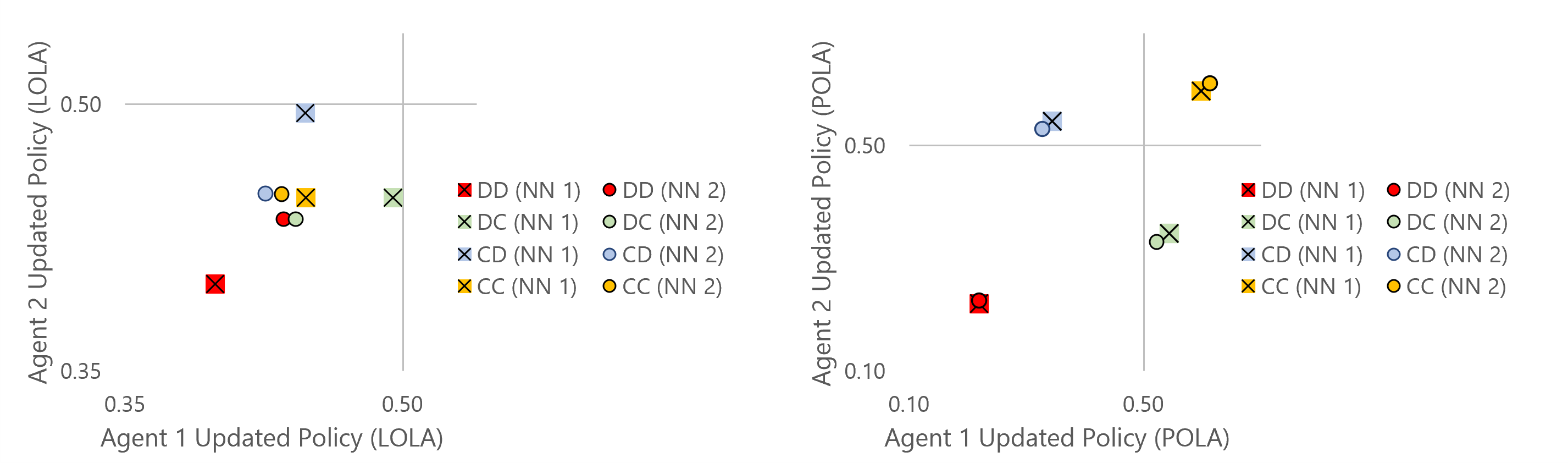}}
\caption{In the IPD (Section \ref{section:hist_one_ipd}), we initialize two neural networks using the same architecture with different weights that produce the same policy, and plot the probability of cooperation in 4 states (DD, DC, CD, CC) after a single update of LOLA and \textit{outer POLA} (Section \ref{section:pola_outer}).
Left: using LOLA; comparing circles to crosses, the different policy parameterizations result in very different updated policies despite the same starting policies and hyperparameters. 
Right: using \textit{outer POLA}; the two updated policies are much more similar.
}
\label{fig:policy_update_vis}
\end{center}
\vskip -0.25in
\end{figure}

To address these issues, we build on ideas from \textit{proximal algorithms} \citep{parikh2014proximal}, reinterpreting LOLA as a 1-step approximation of such a proximal operator, and then devise a new algorithm, \textit{proximal LOLA} (POLA), which uses this proximal formulation directly. POLA replaces the gradient updates in LOLA with proximal operators based on a \textit{proximity penalty} in policy space, which makes POLA invariant to policy parameterization. When the proximal objective has a unique optimum, behaviourally equivalent policies result in behaviourally equivalent updates. 

Solving for the exact POLA update is usually intractable, so we also provide practical algorithms for settings with and without environment rollouts that rely on approximate versions of the POLA update. We show empirically that our algorithms learn reciprocity-based cooperation with significantly higher probability than LOLA in several settings with \textit{function approximation}, even when combined with opponent modeling. 

Figure \ref{fig:policy_update_vis} illustrates our core idea: LOLA is sensitive to policy parameterization, whereas POLA is not. We compare the same policy parameterized by two different sets of weights in the same neural network architecture in the IPD, showing that our approximation to the POLA update is largely parameterization independent, while the LOLA update varies depending on parameterization.

For reproducibility, our code is available at: \url{https://github.com/Silent-Zebra/POLA}.

\subsection{Summary of Motivation, Contributions and Outline of Paper Structure}

\textbf{Motivation:} Reciprocity-based cooperation is a desired learning outcome in partially competitive environments (Section \ref{section:rec-based-coop}). LOLA typically learns such behaviour in simple policy spaces, but not on more complex policy spaces parameterized by neural networks. Our goal is to more reliably learn reciprocity-based cooperation in such policy spaces. 

\textbf{Contributions:}
\begin{itemize}[leftmargin=*]
\item We identify and demonstrate a previously unknown problem with LOLA that helps explain the aforementioned issue: LOLA is sensitive to policy parameterization (Section \ref{section:pre-cond}). 
\item To address this sensitivity, we conceptually reinterpret LOLA as approximating a proximal operator, and derive a new algorithm, \textit{ideal POLA}, which uses the proximal formulation directly (Section \ref{section:ideal_pola}). 
\item \textit{Ideal POLA} updates are invariant to policy parameterization, but usually intractable in practice. So, we develop approximations to the \textit{ideal POLA} update for use in practice (Sections \ref{section:pola_outer}, \ref{section:POLA-DiCE}). 
\item We demonstrate that our approximations more reliably learn reciprocity-based cooperation than LOLA in several partially competitive environments (Section \ref{section:hist_one_ipd}), including with function approximation and opponent modeling (Sections \ref{section:full_hist_ipd}, \ref{section:coin}).
\end{itemize}

\section{Background}

\subsection{Reinforcement Learning}

We consider the standard fully-observable reinforcement learning and Markov Game formulation:

An $N$-agent fully observable Markov Game $\mathcal{M}$ is defined as the tuple $\mathcal{M} = \langle \mathcal{S}, \mathcal{A}, \mathcal{T}, \mathcal{R}, \gamma \rangle$, where $\mathcal{S}$ is the state space, $\mathcal{A} = \{ {A}_1, ..., \mathcal{A}_N \}$ is a set of action spaces, where $\mathcal{A}_i$ for $i \in \{1,...,N\}$ denotes the action space for agent $i$, $\mathcal{T}: \mathcal{S} \times \mathcal{A}_1 \times ... \times \mathcal{A}_N \rightarrow \mathcal{P}(\mathcal{S})$ is a transition function mapping from states and actions to a probability distribution over next states, $\mathcal{R} = \{  \mathcal{R}_1, ..., \mathcal{R}_N \}$ is a set of reward functions, where $\mathcal{R}_i: \mathcal{S} \times \mathcal{A}_1 \times ... \times \mathcal{A}_N \rightarrow \mathbb{R}$ denotes the reward function for agent $i$, and $\gamma \in \mathbb{R}$ is a discount factor. Each agent acts according to a policy $\pi_{\theta^i} : \mathcal{S} \rightarrow \mathcal{P}(\mathcal{A}_i) $, with parameters $\theta^i$.

Let $\boldsymbol\theta$ be the concatenation of each of the individual $\theta^i$ (so for $N=2, \boldsymbol\theta = \{\theta^1, \theta^2\}$) and $\pi_{\vtheta}$ be the concatenation of each of the individual $\pi_{\theta^i}$. Each agent's objective is to maximize its discounted expected return:
$J^i(\pi_{\boldsymbol\theta}) = \mathbb{E}_{ s \sim P(\S), \textbf{a} \sim \pi_{\boldsymbol\theta }} \left[\sum_{t=0}^T \gamma^t r^i_{t} \right]$
where  $\textbf{a} \sim \pi_{\vtheta}$ is a joint action $\textbf{a} = \{a_1, ..., a_N\} \in {\mathcal{A}_1 \times ... \times \mathcal{A}_n}$ drawn from the policies $\pi_{\vtheta}$ given the current state $s \in \mathcal{S}$, 
$P(\S)$ is the probability distribution over states given the current policies and transition function,
$r^i_{t}$ denotes the reward achieved by agent $i$ at time step $t$ as defined by the reward function $\mathcal{R}_i$ and $T \in \mathbb{N}$ is the total time horizon or episode length. 
Throughout this paper, we assume all states are visited with non-zero probability. This is true in practice with stochastic policies, which we use throughout our experiments. 

Let $L^i(\pi_{\boldsymbol\theta}) = -J^i(\pi_{\boldsymbol\theta})$. Each agent's objective is equivalently formulated as minimizing $L^i(\pi_{\boldsymbol\theta})$. We use this as it better aligns with standard optimization frameworks.
For ease of exposition throughout this paper we consider the $N=2$ case and consider updates from the perspective of agent $1$. Updates for agent 2 follow the same structure, but with agents 1 and 2 swapped.

\subsection{Reciprocity-Based Cooperation and the IPD}
\label{section:rec-based-coop}

Reciprocity-based cooperation refers to cooperation \textit{iff} others cooperate. Unlike \textit{unconditional cooperation}, this encourages other self-interested learning agents to cooperate. One well-known example is the \textit{tit-for-tat} (TFT) strategy in the IPD \citep{axelrod1981evolution}. At each time step in the IPD, agents cooperate (C) or defect (D). Defecting always provides a higher individual reward compared to cooperating for the given time step, but both agents are better off when both cooperate than when both defect. The game is played repeatedly with a low probability of termination at each time step, which is modeled by the discount factor $\gamma$ in RL. TFT agents begin by cooperating, then cooperate \textit{iff} the opponent cooperated at the previous time step. Against a TFT agent, the best strategy for a self-interested agent is to cooperate at every time step. Other examples of reciprocity-based cooperation include contributing to a common good when others contribute (to avoid punishment), or reciprocating by collecting only resources that do not harm others (such as in Section \ref{section:coin}).
Crucially, \textit{unconditional cooperation}, for example in the single step prisoner's dilemma or against defecting opponents in the IPD, is usually a \textit{dominated strategy} and thus not a desired learning outcome. Thus, we do not compare against works that modify the learning objective and can in principle  learn \textit{unconditional cooperation} such as \citet{hughes2018inequity}, \citet{wang2018evolving}, \citet{jaques2019social}, \citet{mckee2020social}.

\subsection{Learning with Opponent-Learning Awareness (LOLA)}
\label{section:LOLA}

LOLA \citep{foerster2018learning} introduces \textit{opponent shaping} via a gradient based approach.
Instead of optimizing for $L^1 (\pi_{\theta^1}, \pi_{\theta^2}) $, i.e. the loss for agent 1 given the policy parameters $(\theta^1, \theta^2)$, agent 1 optimizes for $L^1 (\pi_{\theta^1}, \pi_{\theta^2 - \Delta \theta^2 (\theta^1)}  ) $, the loss for agent 1 after agent 2 updates its policy with one naive learning (NL) gradient
step $\Delta \theta^2 (\theta^1) = \eta \nabla_{\theta^2} L^2 (\pi_{\theta^1}, \pi_{\theta^2}) $, where $\eta$ is the opponent's learning rate. Importantly, $\Delta \theta^2 $ is treated as a function of $\theta^1$, and LOLA differentiates through the update $\Delta \theta^2 (\theta^1) $ when agent 1 optimizes for $L^1 (\pi_{\theta^1}, \pi_{\theta^2}) $. Appendix \ref{appendix:lola_update} provides more details, including pseudo-code for LOLA. \citet{foerster2018learning} also provide a policy gradient based formulation for use with environment rollouts when the loss cannot be calculated analytically, and use opponent modeling to avoid needing access to the opponent's policy parameters.

In the IPD with tabular policies, LOLA agents learn a TFT strategy,
with high probability cooperating when the other agent cooperates and defecting when the other defects \citep{foerster2018learning}. Thus, LOLA showed that \textit{with the appropriate learning rule} self-interested agents can learn policies that lead to socially optimal outcomes in the IPD.

\subsection{Proximal Point Method}
\label{background:prox:prelim}

Following \citet{parikh2014proximal}, define the proximal operator
$\prox_{\lambda f} : \mathbb{R}^n \rightarrow \mathbb{R}^n$ of a closed proper convex function $f$ as:
\begin{align*}
\prox_{\lambda f}  (v) = \argmin\limits_{x} \left(f(x) + \frac{1}{2\lambda}||x - v||_2^2 
\right)
\end{align*}
where $||\cdot||_2$ is the Euclidean (L2) norm and $\lambda$ is a hyperparameter. 
If $f$ is differentiable, its first-order approximation near $v$ is:
\begin{align*}
\hat f_v^{(1)} (x) = f(v) + \nabla f(v)^T (x-v)
\end{align*}
The proximal operator of the first-order approximation is:
\begin{align*}
\prox_{\hat f_v^{(1)}} (v) 
&= \argmin\limits_{x} \left(f(v) + \nabla f(v)^T (x-v) + \frac{1}{2\lambda}||x - v||_2^2 \right)
= v - \lambda \nabla f(v)
\end{align*}
which is a standard gradient step on the original function $f(v)$ with step size $\lambda$. 

The proximal point method starts with an iterate $x_0$, then for each time step $t \in \{1, 2, ...\}$, calculates a new iterate $x_t = \prox_{ \lambda f} (x_{t-1}) $. Gradient descent is equivalent to using the proximal point method with a first-order approximation of $f$.

\section{Proximal LOLA (POLA)}

In this section, we first explore how LOLA is sensitive to different types of changes in policy parameterization (Section \ref{section:pre-cond}). Next, we introduce \textit{ideal POLA} (Section \ref{section:ideal_pola}), a method invariant to all such changes. Lastly, we present approximations to POLA and resulting practical algorithms (Sections \ref{section:pola_outer}, \ref{section:POLA-DiCE}). 

\subsection{Sensitivity to Policy Parameterization}
\label{section:pre-cond}

LOLA is sensitive to policy parameterization, as it is defined in terms of (Euclidean) gradients, which are not invariant to smooth transformations of the parameters. 
Specifically, parameterization affects not only the convergence rate, but also which equilibrium is reached, as we illustrate with a simple toy example.
Consider the case of tabular policies with one time step of memory for the IPD.
There are five possible states: DD (both agents last defected), DC (agent 1 defected while agent 2 cooperated), CD (agent 1 cooperated, agent 2 defected), CC (both agents last cooperated), and the starting state. 
For each agent $i \in \{1,2\}$, $\theta^i \in \R^5$ parameterizes a tabular policy over these 5 states, so that $\pi_{\theta^i}(s) = \sigmoid(\theta^i)[s]$, where $v[s]$ denotes the element of vector $v$ corresponding to state $s$.
To illustrate the dependence on parameterization, we apply the invertible transformation $\textbf{Q}^i \theta^i$ where: 
\begin{align*}
\textbf{Q}^1 = 
\begin{psmallmatrix}
1 & 0 & -2 & 0 & 0 \\
0 & 1 & -2 & 0 & 0 \\
0 & 0 & 1  & 0 & 0 \\
0 & 0  & -2  & 1 & 0\\
0 & 0  & -2 & 0 & 1
\end{psmallmatrix}
,
\textbf{Q}^2 = 
\begin{psmallmatrix}
1 & -2 & 0 & 0 & 0 \\
0 & 1 & 0 & 0 & 0 \\
0 & -2 & 1  & 0 & 0 \\
0 & -2  & 0  & 1 & 0\\
0 & -2  & 0 & 0 & 1
\end{psmallmatrix}
\end{align*}
Thus, agent $i$'s policy is now $\pi_{\textbf{Q}^i \theta^i}(s) = \sigmoid (\textbf{Q}^i \theta^i)[s]$.  

\begin{definition}
For policies $\pi_{\theta^{ia}}, \pi_{\theta^{ib}}$, we say $\pi_{\theta^{ia}} = \pi_{\theta^{ib}}$ when for all $s \in \S$, $\pi_{\theta^{ia}}(s) = \pi_{\theta^{ib}}(s)$.
\end{definition}

The transformation matrices $\textbf{Q}^1 , \textbf{Q}^2 $ are non-singular and represent changes of basis. Thus, for any policy $\pi_{\theta^i}$, there exists some $\theta^{i\prime}$ such that $\pi_{\textbf{Q}^i \theta^{i\prime}} = \pi_{\theta^i}$. Despite this, LOLA fails to learn TFT in this transformed policy space (Figure \ref{fig:all_comparison}). 
We also observe this issue when comparing tabular LOLA to LOLA with policies parameterized by neural networks in Figure \ref{fig:all_comparison}; changes in policy representation materially affect LOLA. 
Relatedly, LOLA is sensitive to different policy parameterizations with the same policy representation, as Figure \ref{fig:policy_update_vis} (left) shows. 
In the next subsection, we propose an algorithm that addresses these issues.

\subsection{Ideal POLA}
\label{section:ideal_pola}

In this section, we first formalize the desired invariance property. Next, we introduce an idealized (but impractical) version of POLA, and show that it achieves the desired invariance property.

\begin{definition}
\label{def:invariance}
An update rule $u: \R^n \times \R^m \rightarrow \R^n$ that updates policy parameters $\theta^1 \in \R^n$ using auxiliary information $\theta^2 \in \R^m$
is invariant to policy parameterization when for any $\theta^{1a} \in \R^n, \theta^{1b} \in \R^n, \theta^{2a} \in \R^m, \theta^{2b} \in \R^m$ such that $\pi_{\theta^{1a}} = \pi_{\theta^{1b}}$ and $\pi_{\theta^{2a}} = \pi_{\theta^{2b}}$, $\pi_{u(\theta^{1a}, \theta^{2a})} = \pi_{u(\theta^{1b}, \theta^{2b})}$. In short, if the original policies were the same, so are the new policies (but not necessarily the new policy parameters).
\end{definition}

To achieve invariance to policy parameterization, we introduce our \textit{idealized version of proximal LOLA} (\textit{ideal POLA}). Similarly to PPO \citep{schulman2017proximal}, each player adjusts their policy to increase the probability of highly rewarded states while penalizing the distance moved in policy space. Crucially, each player also assumes the other player updates their parameters through such a proximal update. More formally, at each policy update, agent 1 solves for the following $\theta^{1\prime}$:
\begin{equation}
\label{eq:POLA_outer}
\theta^{1\prime}(\theta^1, \theta^2) = 
\argmin\limits_{\theta^{1\prime\prime}} \left(L^1 (\pi_{\theta^{1\prime\prime}}, \pi_{\theta^{2\prime}(\theta^{1\prime\prime}, \theta^2)}) + \beta_\outer D(\pi_{\theta^1} || \pi_{\theta^{1\prime\prime}}) \right)
\end{equation}
where $D(\pi_{\theta^i} || \pi_{\theta^{i\prime\prime}})$ is shorthand for a general divergence defined on policies; specific choices are given in subsequent sections. 
For $\theta^{2\prime}$ in Equation \ref{eq:POLA_outer}, we choose the following proximal update:
\begin{equation}
\label{eq:POLA_inner}
\theta^{2\prime}(\theta^{1\prime\prime}, \theta^2) =
\argmin\limits_{\theta^{2\prime\prime}} \left(L^2 (\pi_{\theta^{1\prime\prime}}, \pi_{\theta^{2\prime\prime}}  ) + \beta_\inner D(\pi_{\theta^2} || \pi_{\theta^{2\prime\prime}}) \right)
\end{equation}
For the above equations to be well defined, the $\argmin$ must be unique; we assume this in all our theoretical analysis. If different parameters can produce the same policy, or multiple different policies have the same total divergence and loss, then the $\argmin$ is non-unique. However, non-unique solutions is not an issue in practice, as our algorithms approximate $\argmin$ operations with multiple gradient updates.

\begin{theorem}
The POLA update rule $u(\theta^1, \theta^2) = \theta^{1\prime}(\theta^1, \theta^2)$, where $\theta^{1\prime}$ is defined based on Equations \ref{eq:POLA_outer} and \ref{eq:POLA_inner}, is invariant to policy parameterization. 
\end{theorem}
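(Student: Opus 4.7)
The core observation is that both the loss functionals $L^i$ and the policy divergence $D(\cdot \,\|\, \cdot)$ depend on their arguments only through the induced policies, not through the specific parameters used to represent them. Indeed, $L^i(\pi_{\vtheta}) = -\E_{s \sim P(\S), \mathbf{a} \sim \pi_{\vtheta}}[\sum_t \gamma^t r^i_t]$ is defined as an expectation with respect to $\pi_{\vtheta}$, and by hypothesis $D$ is a divergence "defined on policies." The plan is to lift both the inner and outer arg-min problems from parameter space to policy space, and then appeal to the stated uniqueness assumption.

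First, I would analyze the inner problem \eqref{eq:POLA_inner}. Define the policy-space functional
\[
F_\inner(\pi^{1\prime\prime}, \pi^2)(\pi^{2\prime\prime}) := L^2(\pi^{1\prime\prime}, \pi^{2\prime\prime}) + \beta_\inner D(\pi^2 \,\|\, \pi^{2\prime\prime}).
\]
The right-hand side of \eqref{eq:POLA_inner}, viewed as a function of $\theta^{2\prime\prime}$, factors as $F_\inner(\pi_{\theta^{1\prime\prime}}, \pi_{\theta^2})(\pi_{\theta^{2\prime\prime}})$. By the uniqueness assumption, $\argmin_{\theta^{2\prime\prime}}$ picks out a single policy $\pi^{2\prime}$; crucially, the choice of this policy depends on $\theta^{1\prime\prime}$ and $\theta^2$ only through $\pi_{\theta^{1\prime\prime}}$ and $\pi_{\theta^2}$. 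Hence I can write $\pi_{\theta^{2\prime}(\theta^{1\prime\prime},\theta^2)} = G_\inner(\pi_{\theta^{1\prime\prime}}, \pi_{\theta^2})$ for some well-defined policy-valued map $G_\inner$.

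Next, I would repeat the argument for the outer problem \eqref{eq:POLA_outer}. Substituting the previous identity, the outer objective becomes
\[
L^1\!\bigl(\pi_{\theta^{1\prime\prime}},\, G_\inner(\pi_{\theta^{1\prime\prime}}, \pi_{\theta^2})\bigr) + \beta_\outer D\bigl(\pi_{\theta^1} \,\|\, \pi_{\theta^{1\prime\prime}}\bigr),
\]
which depends on $(\theta^1, \theta^2, \theta^{1\prime\prime})$ only through the triple of policies $(\pi_{\theta^1}, \pi_{\theta^2}, \pi_{\theta^{1\prime\prime}})$. Invoking the uniqueness assumption once more, $\argmin_{\theta^{1\prime\prime}}$ determines a single policy $\pi_{\theta^{1\prime}(\theta^1,\theta^2)} =: G_\outer(\pi_{\theta^1}, \pi_{\theta^2})$ that depends on $\theta^1, \theta^2$ only through $\pi_{\theta^1}, \pi_{\theta^2}$. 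Thus if $\pi_{\theta^{1a}} = \pi_{\theta^{1b}}$ and $\pi_{\theta^{2a}} = \pi_{\theta^{2b}}$, then $\pi_{u(\theta^{1a}, \theta^{2a})} = G_\outer(\pi_{\theta^{1a}}, \pi_{\theta^{2a}}) = G_\outer(\pi_{\theta^{1b}}, \pi_{\theta^{2b}}) = \pi_{u(\theta^{1b}, \theta^{2b})}$, matching Definition \ref{def:invariance}.

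The argument is essentially a bookkeeping exercise: the only subtlety is being careful about the uniqueness hypothesis, which must be read as uniqueness \emph{at the level of the induced policy} rather than at the level of parameters (since, as the paper notes, distinct parameters can produce the same policy). Once that reading is fixed, the two-step lift from parameter-space arg-mins to policy-space arg-mins proceeds cleanly, and no further computation with gradients, transformations $\mathbf{Q}^i$, or specific divergences is needed.
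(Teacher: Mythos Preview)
Your proposal is correct and follows essentially the same approach as the paper: both arguments exploit that the inner and outer objectives depend on the parameters only through the induced policies, then use the uniqueness assumption twice (first for the inner $\argmin$, then for the outer) to conclude that the resulting policy is determined by $(\pi_{\theta^1},\pi_{\theta^2})$ alone. Your version packages this via the policy-valued maps $G_\inner$ and $G_\outer$, whereas the paper writes out the two cases $a,b$ side by side, but the logic is identical.
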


The proof is in Appendix \ref{appendix:invariance_proof}. In short, this follows from the fact that all terms in the $\argmin$ in Equations \ref{eq:POLA_outer} and \ref{eq:POLA_inner} are functions of policies, and not directly dependent on policy parameters.

This invariance also is a major advantage in opponent modeling settings, where agents cannot directly access the policy parameters of other agents, and must infer policies from observations. LOLA is ill-specified in such settings; the LOLA update varies depending on the assumed parameterization of opponents' policies. Conversely, POLA updates are invariant to policy parameterization, so any parameterization can be chosen for the opponent model.

To clarify the relation between LOLA and POLA, LOLA is a version of POLA that uses first-order approximations to all objectives and an L2 penalty on policy parameters rather than a divergence over policies. This follows from gradient descent being equivalent to the proximal point method with first order approximations (Section \ref{background:prox:prelim}); we provide a full proof in Appendix \ref{appendix:lola_as_prox}.

Exactly solving Equations $\ref{eq:POLA_outer}$ and $\ref{eq:POLA_inner}$ is usually intractable, so in the following Sections 
\ref{section:pola_outer} and \ref{section:POLA-DiCE}, we formulate practical algorithms that approximate the POLA update.

\subsection{Outer POLA}
\label{section:pola_outer}

To approximate \textit{ideal POLA}, we use a first order approximation to agent 2's objective, which is equivalent to agent 2 taking a gradient step (see Theorem \ref{thm:pola-lola} for details).
That is, instead of finding $\theta^{2\prime}$ via Equation \ref{eq:POLA_inner}, we use $\theta^{2\prime} = \theta^2 - \Delta \theta^2$. Agent 1 thus solves for:
\begin{equation}
\label{eq:pola_outer_only}
\theta^{1\prime}(\theta^1, \theta^2) = \argmin\limits_{\theta^{1\prime\prime}} \left(L^1  (\pi_{\theta^{1\prime\prime}}, \pi_{\theta^2 - \Delta \theta^2(\theta^{1\prime\prime})}) + \beta_\outer D(\pi_{\theta^1} || \pi_{\theta^{1\prime\prime}}) \right)
\end{equation}
Solving the $\argmin$ above exactly is usually intractable. For a practical algorithm, we differentiate through agent 2's gradient steps with unrolling, like in LOLA, and repeatedly iterate with gradient steps on agent 1's proximal objective until a fixed point is found; Algorithm \ref{algo:prox_lola_exact_outer} shows pseudo-code.
We choose $D(\pi_{\theta^1} || \pi_{\theta^{1\prime\prime}}) = \E_{s \sim U(\S)} [D_{KL}(\pi_{\theta^1}(s) || \pi_{\theta^{1\prime\prime}} (s))]$, where $U(\S)$ denotes a uniform distribution over states, as this is most analogous to an L2 distance on tabular policies, and we only test \textit{outer POLA} on settings where tabular policies can be used.

\begin{algorithm}[tb]
\caption{Outer POLA 2-agent formulation: update for agent $1$}
\label{algo:prox_lola_exact_outer}
\begin{algorithmic}
\STATE {\bfseries Input:} Policy parameters $\theta^1, \theta^2$, proximal step size $\alpha_1$, learning rate $\eta$, penalty strength $\beta_\outer$
\STATE Make copy: $\theta^{1\prime\prime} \gets \theta^1$
\REPEAT 
\STATE $ \theta^{2\prime\prime} \gets \theta^{2} - \eta \nabla_{\theta^{2}} L^2(\pi_{\theta^{1\prime\prime}}, \pi_{\theta^{2}})  $
\STATE $\theta^{1\prime\prime} \gets \theta^{1\prime\prime} - \alpha_1 \nabla_{\theta^{1\prime\prime}} (L^1(\pi_{\theta^{1\prime\prime}}, \pi_{\theta^{2\prime\prime}}) + \beta_{\outer} (\E_{s \sim U(\S)} [D_{KL}(\pi_{\theta^1}(s) || \pi_{\theta^{1\prime\prime}} (s))]) )$ 
\UNTIL {$\theta^{1\prime\prime}$ has converged to a fixed point}
\STATE {\bfseries Output:} $\theta^{1\prime\prime}$
\end{algorithmic}
\end{algorithm}

\subsection{POLA-DiCE}
\label{section:POLA-DiCE}

\textit{Outer POLA} assumes we can calculate the exact loss, but often we need to estimate the loss with samples from environment rollouts. For these cases, 
we introduce a policy gradient version of POLA adapted to work with DiCE \citep{foerster2018dice}. DiCE is a sample-based estimator that makes it easy to estimate higher order derivatives using backprop. More details about DiCE and its combination with LOLA are in the Appendix (\ref{appendix:dice}, \ref{appendix:loaded_dice}) and in ~\cite{foerster2018dice}.

POLA-DiCE approximates the $\argmin$ in \textit{ideal POLA} (Equations \ref{eq:POLA_outer} and \ref{eq:POLA_inner}) with a fixed number of gradient steps on the proximal objectives, where steps on the outer objective (\ref{eq:POLA_outer}) differentiate through the unrolled steps on the inner objective (\ref{eq:POLA_inner}). We choose $D(\pi_{\theta^i} || \pi_{\theta^{i\prime\prime}}) = \E_{s \sim \S} [D_{KL}(\pi_{\theta^i}(s) || \pi_{\theta^{i\prime\prime}} (s)) ]$, and approximate the expectation under the true state visitation with an average over states visited during rollouts. 
$s_{\leq T}$ denotes the states from a set of rollouts with $T$ time steps: $s_{\leq T} \triangleq \{s_t : t \in \{1,...,T\}  \}$, where $s_t$ is the state at time step $t$.
$D(\pi_{\theta^i}, \pi_{\theta^{i\prime\prime}} | s_{\leq T}) \triangleq \frac{1}{|s_{\leq T}|} \sum\limits_{s \in s_{\leq T}} [D_{KL}(\pi_{\theta^i}(s) || \pi_{\theta^{i\prime\prime}} (s)) ]$ denotes our sample based approximation to $\E_{s \sim \S} [D_{KL}(\pi_{\theta^i}(s) || \pi_{\theta^{i\prime\prime}} (s)) ]$.
As in LOLA-DiCE, rollouts are done in a simulator, and used in the DiCE loss $\mathcal{L}^i_{\epsdice{2}(\pi_{\theta^{1}}, \pi_{\theta^{2}} )}$. For our experiments, we assume full knowledge of transition dynamics, but in principle an environment model could be used instead.
Algorithm \ref{algo:pola_dice} provides pseudo-code for POLA-DiCE.

\begin{algorithm}
\caption{POLA-DiCE 2-agent formulation: update for agent $1$}
\label{algo:pola_dice}
\begin{algorithmic}
\STATE {\bfseries Input:} Policy parameters $\theta^1, \theta^2$, learning rates $\alpha_1, \alpha_2$, penalty hyperparameters $\beta_\inner, \beta_\outer$, number of outer steps $M$ and inner steps $K$
\STATE Initialize: $\theta^{1\prime\prime} \gets \theta^1$
\FOR {$m$ in $1...M$}
\STATE Initialize: $\theta^{2\prime\prime} \gets \theta^2$
\FOR {$k$ in $1...K$}
\STATE Rollout trajectories with states $s^{\inner}_{\leq T}$
using policies $(\pi_{\theta^{1\prime\prime}}, \pi_{\theta^{2\prime\prime}})$
\STATE $\theta^{2\prime\prime} \gets \theta^{2\prime\prime} - \alpha_2 \nabla_{\theta^{2\prime\prime} } 
( \mathcal{L}_{\epsdice{2}(\pi_{\theta^{1\prime\prime}}, \pi_{\theta^{2\prime\prime}})}^2  + \beta_\inner D(\pi_{\theta^2}, \pi_{\theta^{2\prime\prime}} | s^{\text{in}}_{\leq T}) )
$
\ENDFOR
\STATE Rollout trajectories with states $s^{\outer}_{\leq T}$
using policies $(\pi_{\theta^{1\prime\prime}}, \pi_{\theta^{2\prime\prime}})$
\STATE $\theta^{1\prime\prime} \gets \theta^{1\prime\prime} - \alpha_1 \nabla_{\theta^{1\prime\prime}} (\mathcal{L}^1_{\epsdice{2}(\pi_{\theta^{1\prime\prime}}, \pi_{\theta^{2\prime\prime}})} + \beta_\outer D(\pi_{\theta^1}, \pi_{\theta^{1\prime\prime}} | s^{\text{out}}_{\leq T}))$
\ENDFOR
\STATE {\bfseries Output:} $\theta^{1\prime\prime}$
\end{algorithmic}
\end{algorithm}

For sufficiently large numbers of inner steps $K$ and outer steps $M$, and sufficiently small learning rates, POLA-DiCE iterates until a fixed point is found. Unfortunately, iterating to convergence often requires a very large amount of rollouts (and memory for the inner steps).
When $M = 1$ and $\beta_\inner, \beta_\outer = 0$,
POLA-DiCE is equivalent to LOLA-DiCE \citep{foerster2018dice}.

Without opponent modeling, LOLA-DiCE and POLA-DiCE directly access the other agent's policy parameters, using those in simulator rollouts for policy updates. In the opponent modeling (OM) setting, agents can only access actions taken by the other agent, from real environment rollouts. For policy updates, agents must use learned policy models in simulator rollouts. We learn policy models by treating observed actions as targets in a supervised classification setting, as in behaviour cloning \citep{ross2011no}. Figure \ref{fig:om_graphic} illustrates the process at each time step; we keep policy models from previous iterations and update incrementally on newly observed actions.

\begin{figure}[ht]
\vskip -0.15in
\begin{center}
\centerline{\includegraphics[scale=0.5]{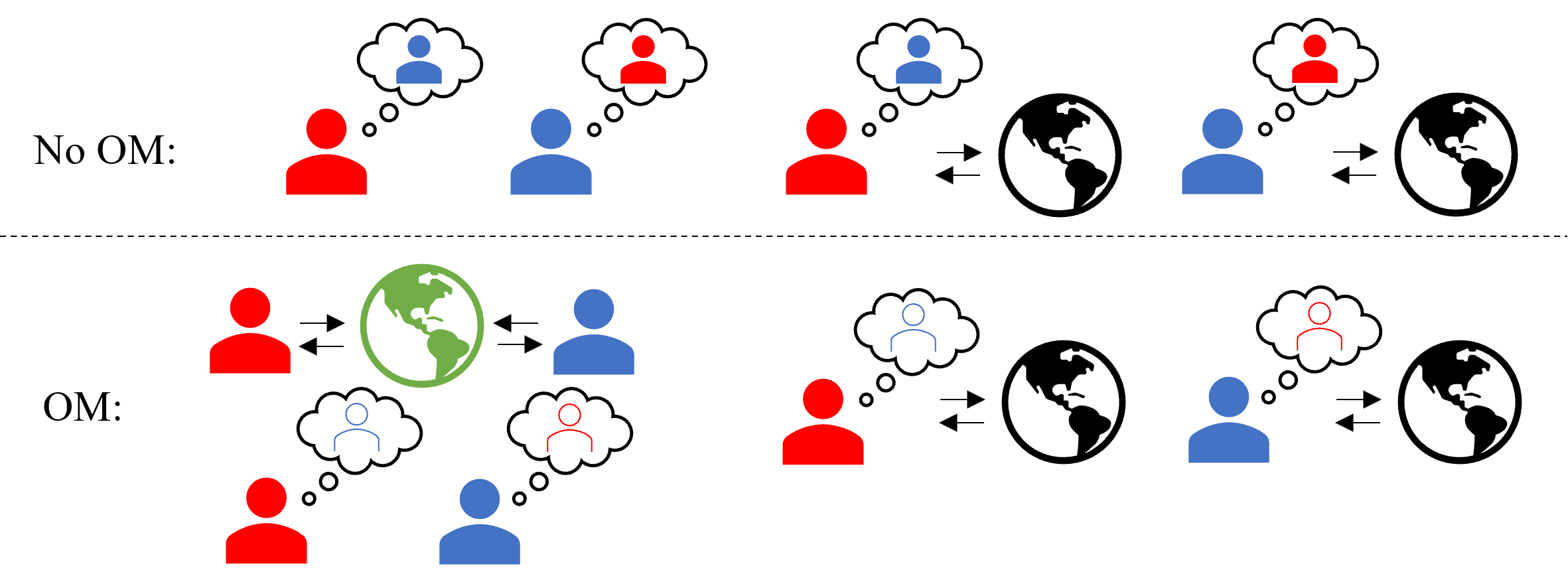}}
\caption{Illustration of the training process at each time step for LOLA-DiCE and POLA-DiCE. Without opponent modeling, each agent directly uses the other agent's policy parameters in simulator rollouts for policy updates. With opponent modeling, each agent first learns a policy model of the opponent based on observed actions from real environment rollouts. Agents then use the learned models in simulator rollouts for policy updates. We assume known dynamics, so no environment model needs to be learned for simulator rollouts.}
\label{fig:om_graphic}
\end{center}
\vskip -0.3in
\end{figure}

\section{Experiments}

To investigate how useful our approximations to \textit{ideal POLA} are, we run experiments to answer the following questions:
1) Does \textit{outer POLA} learn reciprocity-based cooperation more consistently than LOLA, across a variety of policy parameterizations, in the IPD with one-step memory (Section \ref{section:hist_one_ipd})? 
2) Does POLA-DiCE learn reciprocity-based cooperation more consistently than LOLA-DiCE in settings that require function approximation and rollouts (IPD with full history (Section \ref{section:full_hist_ipd}) and coin game (Section \ref{section:coin}))? If so, do these results still hold when using opponent modeling?

\subsection{One-Step Memory IPD}
\label{section:hist_one_ipd}

With one-step memory, the observations are the actions by both agents at the previous time step. 
In this setting, we can use the exact loss in gradient updates (see Appendix \ref{appendix:exact_loss_ipd} for details).

To investigate behaviour across settings that vary the difficulty of learning reciprocity-based cooperation, we introduce a \textit{cooperation factor} $f \in \R$, which determines
the reward for cooperating relative to defecting. At each time step, let $c$ be the number of agents who cooperated. Each agent's reward is $c * f / 2 - \mathbbm{1}[\text{agent contributed}]$. This is a specific instance of the \textit{contribution game} from \citet{barbosa2020emergence}, with two agents. $1 < f < 2$ satisfies social dilemma characteristics, where defecting always provides higher individual reward, but two cooperators are both better off than two defectors.
For more details on the problem setup, see Appendices \ref{appendix:cf_details} and \ref{appendix:ipd-one-funcapprox-state}.

Figure \ref{fig:all_comparison} compares LOLA and \textit{outer POLA} (Section \ref{section:pola_outer}) using various $f$ and policy parameterizations. To provide a succinct graphical representation of how well agents learn reciprocity-based cooperation, we test how often agents \textit{find TFT}. We consider agents cooperating with each other with high probability, achieving average reward $> 80\%$ of the socially optimal, but both cooperating with probability $< 0.65$ if the opponent last defected, to have \textit{found TFT}. These thresholds are somewhat arbitrary; Appendix \ref{appendix:ipd-one-detailed_results} provides detailed policy probabilities that support our conclusions without such thresholds.

We reproduce 
\citet{foerster2018learning}'s result that naive learning converges to unconditional defection, while LOLA using tabular policies finds TFT.
However, changes in policy parameterization greatly hinder LOLA's ability to find TFT. In contrast, \textit{outer POLA} finds TFT even with function approximation, and finds TFT significantly more often than LOLA in the pre-conditioned setting described in Section \ref{section:pre-cond}. Appendix \ref{appendix:ipd1_hparams} further discusses hyperparameter settings. 

\begin{figure}[ht]
\vskip -0.15in
\begin{center}
\centerline{\includegraphics[scale=0.6]{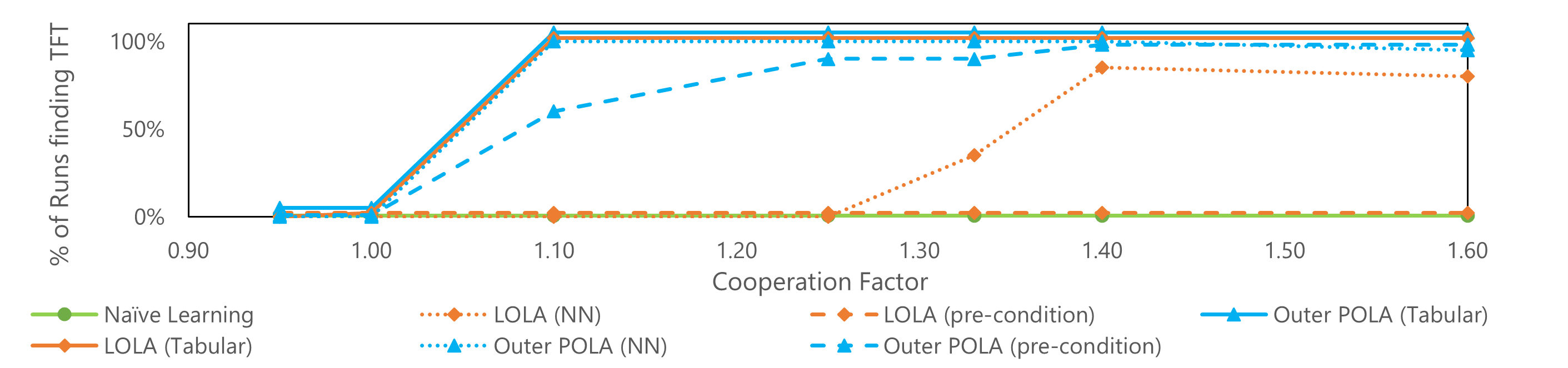}}
\caption{
Comparison of LOLA and \textit{outer POLA} 
in the one-step memory IPD with various $f$, plotting the percentage of 20 runs that \textit{find TFT}. 
``NN'' denotes policies parameterized by neural networks. ``Pre-condition'' denotes the setting in Section \ref{section:pre-cond}.
Given hyperparameter tuning, tabular LOLA always finds TFT (for $f > 1$), whereas LOLA with function approximation fails on lower $f$. In the pre-conditioned setting, LOLA completely fails to find TFT. 
In contrast, \textit{outer POLA} 
finds TFT regardless of whether the policy is tabular or a neural network, and retains most of its performance in the pre-conditioned setting.
As sanity checks, naive learning ($\eta = 0$) always fails to find TFT, and all algorithms always defect for $f < 1$. }
\label{fig:all_comparison}
\end{center}
\vskip -0.25in
\end{figure}

Qualitatively, Figure \ref{fig:policy_update_vis} shows that in the IPD with one-step memory, $f = 1.33$, and neural network parameterized policies, \textit{outer POLA} closely approximates the invariance provided by \textit{ideal POLA}.

To demonstrate that POLA allows for any choice of opponent model, Appendix \ref{appendix:ipd-om} provides further experiments in the IPD with opponent modeling, using a version of POLA similar to POLA-DiCE.

\subsection{Full History IPD}
\label{section:full_hist_ipd}

Next, we relax the assumption that agents are limited to one-step memory and instead consider policies that condition on the entire history, which makes using function approximation and rollouts necessary. We parameterize policies using GRUs \citep{cho2014properties} and test whether POLA-DiCE still learns reciprocity-based cooperation within this much larger 
policy space.
We use $f = 1.33$ for the reward structure. For more details on the problem setting, policy parameterization, and hyperparameters, see Appendix \ref{appendix:full_hist_ipd}.

Figure \ref{fig:gru_ipd} shows results in this setting. LOLA agents sometimes learn reciprocity-based cooperation but often learn to always defect, achieving low scores against each other on average. POLA agents learn reciprocity-based cooperation much more consistently, almost always cooperating with each other but defecting with high probability if the opponent always defects.
Furthermore, opponent modeling works well with POLA; POLA agents behave similarly using opponent modeling instead of accessing policy parameters directly.

\begin{figure}[ht]
\vskip -0.15in
\begin{center}
\centerline{\includegraphics[scale=0.4]{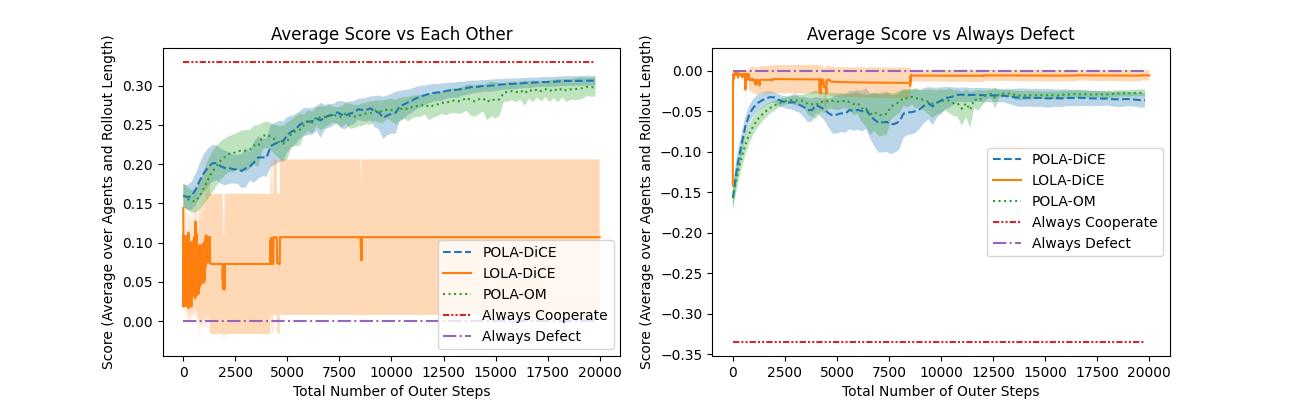}}
\caption{Comparison of LOLA-DiCE and POLA-DiCE on the IPD with GRU parameterized policies that condition on the full action history.
Left: POLA-DiCE agents learn to cooperate with each other with high probability, achieving close to the socially optimal reward (always cooperate), whereas LOLA agents cooperate with each other much less consistently, often learning to always defect.
Right: we test the learned policies against a hard-coded rule that always defects. POLA-DiCE agents defect against such an opponent with high probability, achieving close to the optimal reward of 0, showing POLA agents cooperate only when the other agent reciprocates.
POLA-OM (POLA-DiCE with opponent modeling) agents show similar behaviour to POLA-DiCE agents.
All results are averaged over 10 random seeds with 95\% confidence intervals shown.
}
\label{fig:gru_ipd}
\end{center}
\vskip -0.3in
\end{figure}

\subsection{Coin Game}
\label{section:coin}

To investigate the scalability of POLA-DiCE under higher dimensional observations and function approximation, we test LOLA-DiCE and POLA-DiCE in the coin game setting from \citet{lerer2017maintaining}. The coin game consists of a 3x3 grid in which two agents, red and blue, move around and collect coins. There is always one coin, coloured either red or blue, which spawns with the other colour after being collected. Collecting any coin grants a reward of 1; collecting a coin with colour different from the collecting agent gives the other agent -2 reward. The coin game embeds a temporally extended social dilemma; if agents defect by trying to pick up all coins, both agents get 0 total reward in expectation, whereas if agents cooperate by only picking up coins of their own colour, they achieve positive expected average reward (maximum: $\frac 1 3$ per time step). Appendix \ref{appendix:coin} provides more detail. We again parameterize the agents' policies with GRUs \citep{cho2014properties}.

Figure \ref{fig:coin} shows that
POLA-DiCE agents learn to cooperate with higher probability than LOLA-DiCE agents,\footnote{See Appendix \ref{appendix:coin} for why the LOLA results cannot be directly compared with \citet{foerster2018learning}} picking up a larger proportion of their own coins, and again do not naively cooperate. POLA-DiCE agents with opponent modeling also learn reciprocity-based cooperation, but slightly less so than with direct access to policy parameters, likely due to the noise from opponent modeling. 

\begin{figure}[ht]
\vskip -0.15in
\begin{center}
\centerline{\includegraphics[scale=0.4]{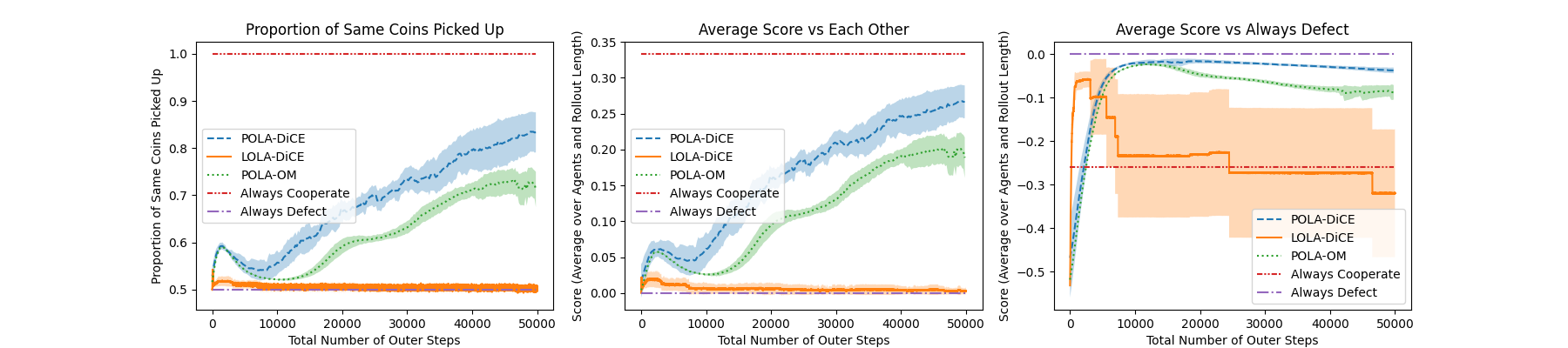}}
\caption{Comparison of LOLA-DiCE and POLA-DiCE on the coin game. Left: number of coins collected where the coin's colour matched the agent's colour, divided by the total number of coins collected. POLA agents cooperate more than LOLA agents, collecting a larger proportion of their own coloured coins. Middle: average score for the agents against each other. POLA-DiCE agents achieve close to the maximum cooperative reward.
Right: we test the learned policies against a hard-coded rule that always \textit{defects} by taking the shortest path to the coin regardless of colour. POLA agents defect, achieving scores close to the optimal of 0 against such an opponent, again showing POLA agents cooperate based on reciprocity. LOLA agents quickly learn to always defect; subsequent training is highly unstable.}
\label{fig:coin}
\end{center}
\vskip -0.3in
\end{figure}

\section{Related Work}
\label{section:relatedwork}

POLA-DiCE (Section \ref{section:POLA-DiCE}) can alternatively be motivated as replacing the policy gradient update inherent in LOLA-DiCE with a proximal-style update like that of TRPO \citep{schulman2015trust} or the adaptive KL-penalty version of PPO \citep{schulman2017proximal}. 

There are other extensions to LOLA that address problems orthogonal to policy parameterization sensitivity. 
In concurrent work, \citet{willi2022cola} highlight that LOLA is inconsistent in self-play;
the update each player assumes the opponent makes 
is different from their actual update. They propose COLA,
learning update rules $\Delta \theta^1$ and $\Delta \theta^2$ that are consistent with each other. However, their notion of consistency is gradient-based and thus not invariant to policy parameterization, whereas POLA is not consistent. SOS \citep{letcher2018stable}, which combines the advantages of LOLA and LookAhead \citep{zhang2010multi}, is also not invariant to policy parameterization. 

\citet{al2018continuous} formulate a meta-RL framework with a policy gradient theorem, and apply PPO as their optimization algorithm; this is one instance of combining proximal style operators with reinforcement learning style policy gradients for optimization with higher order gradients. However, they focus on adaptation rather than explicitly modeling and shaping opponent behaviour. 

MATRL \citep{wen2021game} sets up a metagame between the original policies and new policies calculated with independent TRPO updates for each agent,
solving for a Nash equilibrium in the metagame at each step. MATRL finds a best response without actively shaping its opponents' learning, so it is similar to LookAhead \citep{zhang2010multi} (which results in unconditional defection) rather than LOLA. 

M-FOS \citep{lu2022model} sets up a metagame where each meta-step is a full environment rollout, and the meta-reward at each step is the cumulative discounted return of the rollout. They use model-free optimization methods to learn meta-policies that shape the opponent's behaviour over meta-steps. We do not compare against it as it is concurrent work without published code at the time of writing.

\citet{badjatiya2021status} introduce \textit{status-quo loss}, in which agents imagine the current situation being repeated for several steps; combined with the usual RL objective, this leads to cooperation in social dilemmas. However, their method learns to defect in the one-step memory IPD in the states DC and CD, and thus does not learn reciprocity-based cooperation. 

\citet{zhao2021provably} provide a detailed numerical analysis of the function approximation setting for two-player zero-sum Markov Games.
\citet{fiez2021minimax} considers 
a proximal method for minimax optimization. In contrast to both, we consider the general-sum setting.

\section{Limitations}
\label{section:limitations}

\textit{Outer POLA} and POLA-DiCE approximate \textit{ideal POLA}, so are not completely invariant to policy parameterization; Figures \ref{fig:policy_update_vis} and \ref{fig:all_comparison} show how close the approximation is in the one-step memory IPD. POLA introduces additional hyperparameters compared to LOLA (e.g. $\beta_\inner, \beta_\outer$ and number of outer steps for POLA-DiCE). POLA usually requires many rollouts for each update; future work could explore ways to mitigate this. For example, in Appendix \ref{appendix:pola_dice_repeat_train} we formulate a version of POLA-DiCE closer to PPO \citep{schulman2017proximal}, that repeatedly trains on samples for improved sample efficiency.

\section{Conclusion and Future Work}

Motivated by making LOLA invariant to policy parameterization, we introduced \textit{ideal POLA}. Empirically, we demonstrated that practical approximations to \textit{ideal POLA} learn reciprocity-based cooperation significantly more often than LOLA in several social dilemma settings. 

Future work could explore and test: policy parameterization invariant versions of LOLA extensions like COLA \citep{willi2022cola} and SOS \citep{letcher2018stable}; POLA formulations with improved sample efficiency (e.g. Appendix \ref{appendix:pola_dice_repeat_train}); the $N$-agent version of POLA (Appendices \ref{appendix:pola_N}, \ref{appendix:pola_N_dice}) in $N$-agent versions of the IPD \citep{barbosa2020emergence} and larger environments such as Harvest and Cleanup \citep{hughes2018inequity}; POLA formulations with adaptive penalty parameters $\beta_\inner, \beta_\outer$; connections to Mirror Learning \citep{kuba2022mirror}; adapting approximations to proximal point methods such as the extra-gradient method \citep{mokhtari2020unified} to work with POLA.

We are excited and hopeful that this work opens the door to scaling LOLA and related algorithms to practical settings that require function approximation and opponent modeling.

\section*{Acknowledgements}


Resources used in this research were provided, in part, by the Province of Ontario, the Government of Canada, and companies sponsoring the Vector Institute. 
The Foerster Lab for AI Research (FLAIR) is grateful to the Cooperative AI Foundation for their support. 
We thank the anonymous NeurIPS reviewers for helpful comments on earlier versions of this paper.

\bibliographystyle{plainnat}
\bibliography{refs}

\begin{thebibliography}{29}
\providecommand{\natexlab}[1]{#1}
\providecommand{\url}[1]{\texttt{#1}}
\expandafter\ifx\csname urlstyle\endcsname\relax
  \providecommand{\doi}[1]{doi: #1}\else
  \providecommand{\doi}{doi: \begingroup \urlstyle{rm}\Url}\fi

\bibitem[Al-Shedivat et~al.(2018)Al-Shedivat, Bansal, Burda, Sutskever,
  Mordatch, and Abbeel]{al2018continuous}
Maruan Al-Shedivat, Trapit Bansal, Yura Burda, Ilya Sutskever, Igor Mordatch,
  and Pieter Abbeel.
\newblock Continuous adaptation via meta-learning in nonstationary and
  competitive environments.
\newblock In \emph{International Conference on Learning Representations}, 2018.

\bibitem[Axelrod and Hamilton(1981)]{axelrod1981evolution}
Robert Axelrod and William~Donald Hamilton.
\newblock The evolution of cooperation.
\newblock \emph{science}, 211\penalty0 (4489):\penalty0 1390--1396, 1981.

\bibitem[Badjatiya et~al.(2021)Badjatiya, Sarkar, Puri, Subramanian, Sinha,
  Singh, and Krishnamurthy]{badjatiya2021status}
Pinkesh Badjatiya, Mausoom Sarkar, Nikaash Puri, Jayakumar Subramanian,
  Abhishek Sinha, Siddharth Singh, and Balaji Krishnamurthy.
\newblock Status-quo policy gradient in multi-agent reinforcement learning.
\newblock \emph{arXiv preprint arXiv:2111.11692}, 2021.

\bibitem[Barbosa et~al.(2020)Barbosa, Costa, Melo, Sichman, and
  Santos]{barbosa2020emergence}
Jo{\~a}o~Vitor Barbosa, Anna H~Reali Costa, Francisco~S Melo, Jaime~S Sichman,
  and Francisco~C Santos.
\newblock Emergence of cooperation in n-person dilemmas through actor-critic
  reinforcement learning.
\newblock In \emph{Adaptive Learning Workshop (ALA), AAMAS}, 2020.

\bibitem[Cho et~al.(2014)Cho, Van~Merri{\"e}nboer, Bahdanau, and
  Bengio]{cho2014properties}
Kyunghyun Cho, Bart Van~Merri{\"e}nboer, Dzmitry Bahdanau, and Yoshua Bengio.
\newblock On the properties of neural machine translation: Encoder-decoder
  approaches.
\newblock \emph{arXiv preprint arXiv:1409.1259}, 2014.

\bibitem[Dawes(1980)]{dawes1980social}
Robyn~M Dawes.
\newblock Social dilemmas.
\newblock \emph{Annual review of psychology}, 31\penalty0 (1):\penalty0
  169--193, 1980.

\bibitem[Farquhar et~al.(2019)Farquhar, Whiteson, and
  Foerster]{farquhar2019loaded}
Gregory Farquhar, Shimon Whiteson, and Jakob Foerster.
\newblock Loaded dice: trading off bias and variance in any-order score
  function estimators for reinforcement learning.
\newblock In \emph{Proceedings of the 33rd International Conference on Neural
  Information Processing Systems}, pages 8151--8162, 2019.

\bibitem[Fiez et~al.(2021)Fiez, Jin, Netrapalli, and Ratliff]{fiez2021minimax}
Tanner Fiez, Chi Jin, Praneeth Netrapalli, and Lillian~J Ratliff.
\newblock Minimax optimization with smooth algorithmic adversaries.
\newblock \emph{arXiv preprint arXiv:2106.01488}, 2021.

\bibitem[Foerster et~al.(2018{\natexlab{a}})Foerster, Chen, Al-Shedivat,
  Whiteson, Abbeel, and Mordatch]{foerster2018learning}
Jakob Foerster, Richard~Y Chen, Maruan Al-Shedivat, Shimon Whiteson, Pieter
  Abbeel, and Igor Mordatch.
\newblock Learning with opponent-learning awareness.
\newblock In \emph{Proceedings of the 17th International Conference on
  Autonomous Agents and MultiAgent Systems}, pages 122--130. International
  Foundation for Autonomous Agents and Multiagent Systems, 2018{\natexlab{a}}.

\bibitem[Foerster et~al.(2018{\natexlab{b}})Foerster, Farquhar, Al-Shedivat,
  Rockt{\"a}schel, Xing, and Whiteson]{foerster2018dice}
Jakob Foerster, Gregory Farquhar, Maruan Al-Shedivat, Tim Rockt{\"a}schel, Eric
  Xing, and Shimon Whiteson.
\newblock Dice: The infinitely differentiable monte carlo estimator.
\newblock In \emph{International Conference on Machine Learning}, pages
  1529--1538. PMLR, 2018{\natexlab{b}}.

\bibitem[Hughes et~al.(2018)Hughes, Leibo, Phillips, Tuyls, Due{\~n}ez-Guzman,
  Casta{\~n}eda, Dunning, Zhu, McKee, Koster, et~al.]{hughes2018inequity}
Edward Hughes, Joel~Z Leibo, Matthew Phillips, Karl Tuyls, Edgar
  Due{\~n}ez-Guzman, Antonio~Garc{\'\i}a Casta{\~n}eda, Iain Dunning, Tina Zhu,
  Kevin McKee, Raphael Koster, et~al.
\newblock Inequity aversion improves cooperation in intertemporal social
  dilemmas.
\newblock In \emph{Advances in neural information processing systems}, pages
  3326--3336, 2018.

\bibitem[Jaques et~al.(2019)Jaques, Lazaridou, Hughes, Gulcehre, Ortega,
  Strouse, Leibo, and De~Freitas]{jaques2019social}
Natasha Jaques, Angeliki Lazaridou, Edward Hughes, Caglar Gulcehre, Pedro
  Ortega, DJ~Strouse, Joel~Z Leibo, and Nando De~Freitas.
\newblock Social influence as intrinsic motivation for multi-agent deep
  reinforcement learning.
\newblock In \emph{International conference on machine learning}, pages
  3040--3049. PMLR, 2019.

\bibitem[Kakade and Langford(2002)]{kakade2002approximately}
Sham Kakade and John Langford.
\newblock Approximately optimal approximate reinforcement learning.
\newblock In \emph{In Proc. 19th International Conference on Machine Learning}.
  Citeseer, 2002.

\bibitem[Kuba et~al.(2022)Kuba, de~Witt, and Foerster]{kuba2022mirror}
Jakub~Grudzien Kuba, Christian~Schroeder de~Witt, and Jakob Foerster.
\newblock Mirror learning: A unifying framework of policy optimisation.
\newblock \emph{arXiv preprint arXiv:2201.02373}, 2022.

\bibitem[Lerer and Peysakhovich(2017)]{lerer2017maintaining}
Adam Lerer and Alexander Peysakhovich.
\newblock Maintaining cooperation in complex social dilemmas using deep
  reinforcement learning.
\newblock \emph{arXiv preprint arXiv:1707.01068}, 2017.

\bibitem[Letcher et~al.(2018)Letcher, Foerster, Balduzzi, Rockt{\"a}schel, and
  Whiteson]{letcher2018stable}
Alistair Letcher, Jakob Foerster, David Balduzzi, Tim Rockt{\"a}schel, and
  Shimon Whiteson.
\newblock Stable opponent shaping in differentiable games.
\newblock \emph{arXiv preprint arXiv:1811.08469}, 2018.

\bibitem[Lu et~al.(2022)Lu, Willi, de~Witt, and Foerster]{lu2022model}
Chris Lu, Timon Willi, Christian~Schroeder de~Witt, and Jakob Foerster.
\newblock Model-free opponent shaping.
\newblock \emph{arXiv preprint arXiv:2205.01447}, 2022.

\bibitem[McKee et~al.(2020)McKee, Gemp, McWilliams, Du{\'e}{\~n}ez-Guzm{\'a}n,
  Hughes, and Leibo]{mckee2020social}
Kevin~R McKee, Ian Gemp, Brian McWilliams, Edgar~A Du{\'e}{\~n}ez-Guzm{\'a}n,
  Edward Hughes, and Joel~Z Leibo.
\newblock Social diversity and social preferences in mixed-motive reinforcement
  learning.
\newblock \emph{arXiv preprint arXiv:2002.02325}, 2020.

\bibitem[Mokhtari et~al.(2020)Mokhtari, Ozdaglar, and
  Pattathil]{mokhtari2020unified}
Aryan Mokhtari, Asuman Ozdaglar, and Sarath Pattathil.
\newblock A unified analysis of extra-gradient and optimistic gradient methods
  for saddle point problems: Proximal point approach.
\newblock In \emph{International Conference on Artificial Intelligence and
  Statistics}, pages 1497--1507. PMLR, 2020.

\bibitem[Parikh and Boyd(2014)]{parikh2014proximal}
Neal Parikh and Stephen Boyd.
\newblock Proximal algorithms.
\newblock \emph{Foundations and Trends in optimization}, 1\penalty0
  (3):\penalty0 127--239, 2014.

\bibitem[Ross et~al.(2011)Ross, Gordon, and Bagnell]{ross2011no}
St{\'e}phane Ross, Geoffrey~J Gordon, and J~Andrew Bagnell.
\newblock No-regret reductions for imitation learning and structured
  prediction.
\newblock In \emph{In AISTATS}. Citeseer, 2011.

\bibitem[Schulman et~al.(2015{\natexlab{a}})Schulman, Levine, Abbeel, Jordan,
  and Moritz]{schulman2015trust}
John Schulman, Sergey Levine, Pieter Abbeel, Michael Jordan, and Philipp
  Moritz.
\newblock Trust region policy optimization.
\newblock In \emph{International conference on machine learning}, pages
  1889--1897. PMLR, 2015{\natexlab{a}}.

\bibitem[Schulman et~al.(2015{\natexlab{b}})Schulman, Moritz, Levine, Jordan,
  and Abbeel]{schulman2015high}
John Schulman, Philipp Moritz, Sergey Levine, Michael Jordan, and Pieter
  Abbeel.
\newblock High-dimensional continuous control using generalized advantage
  estimation.
\newblock \emph{arXiv preprint arXiv:1506.02438}, 2015{\natexlab{b}}.

\bibitem[Schulman et~al.(2017)Schulman, Wolski, Dhariwal, Radford, and
  Klimov]{schulman2017proximal}
John Schulman, Filip Wolski, Prafulla Dhariwal, Alec Radford, and Oleg Klimov.
\newblock Proximal policy optimization algorithms.
\newblock \emph{arXiv preprint arXiv:1707.06347}, 2017.

\bibitem[Wang et~al.(2018)Wang, Hughes, Fernando, Czarnecki,
  Du{\'e}{\~n}ez-Guzm{\'a}n, and Leibo]{wang2018evolving}
Jane~X Wang, Edward Hughes, Chrisantha Fernando, Wojciech~M Czarnecki, Edgar~A
  Du{\'e}{\~n}ez-Guzm{\'a}n, and Joel~Z Leibo.
\newblock Evolving intrinsic motivations for altruistic behavior.
\newblock \emph{arXiv preprint arXiv:1811.05931}, 2018.

\bibitem[Wen et~al.(2021)Wen, Chen, Yang, Tian, Li, Chen, and
  Wang]{wen2021game}
Ying Wen, Hui Chen, Yaodong Yang, Zheng Tian, Minne Li, Xu~Chen, and Jun Wang.
\newblock A game-theoretic approach to multi-agent trust region optimization.
\newblock \emph{arXiv preprint arXiv:2106.06828}, 2021.

\bibitem[Willi et~al.(2022)Willi, Treutlein, Letcher, and
  Foerster]{willi2022cola}
Timon Willi, Johannes Treutlein, Alistair Letcher, and Jakob Foerster.
\newblock Cola: Consistent learning with opponent-learning awareness.
\newblock \emph{arXiv preprint arXiv:2203.04098}, 2022.

\bibitem[Zhang and Lesser(2010)]{zhang2010multi}
Chongjie Zhang and Victor Lesser.
\newblock Multi-agent learning with policy prediction.
\newblock In \emph{Twenty-fourth AAAI conference on artificial intelligence},
  2010.

\bibitem[Zhao et~al.(2021)Zhao, Tian, Lee, and Du]{zhao2021provably}
Yulai Zhao, Yuandong Tian, Jason~D Lee, and Simon~S Du.
\newblock Provably efficient policy gradient methods for two-player zero-sum
  markov games.
\newblock \emph{arXiv preprint arXiv:2102.08903}, 2021.

\end{thebibliography}

\section*{Checklist}

\begin{enumerate}

\item For all authors...
\begin{enumerate}
  \item Do the main claims made in the abstract and introduction accurately reflect the paper's contributions and scope?
    \answerYes{}
  \item Did you describe the limitations of your work?
    \answerYes{Section \ref{section:limitations}. }
  \item Did you discuss any potential negative societal impacts of your work?
    \answerYes{Appendix \ref{section:impact}.}
  \item Have you read the ethics review guidelines and ensured that your paper conforms to them?
    \answerYes{}
\end{enumerate}

\item If you are including theoretical results...
\begin{enumerate}
  \item Did you state the full set of assumptions of all theoretical results?
    \answerYes{some assumptions like non-zero state visitation probability are stated earlier (e.g. in the background section) and apply throughout.}
        \item Did you include complete proofs of all theoretical results?
    \answerYes{in the Appendix, with very brief proof sketches in the main paper.} 
\end{enumerate}

\item If you ran experiments...
\begin{enumerate}
  \item Did you include the code, data, and instructions needed to reproduce the main experimental results (either in the supplemental material or as a URL)?
    \answerYes{Github repo with code and instructions provided in intro (\url{https://github.com/Silent-Zebra/POLA}).}
  \item Did you specify all the training details (e.g., data splits, hyperparameters, how they were chosen)?
    \answerYes{Either in the main paper or in the Appendix (mostly Appendix \ref{appendix:expmt_hparams}).}
        \item Did you report error bars (e.g., with respect to the random seed after running experiments multiple times)?
    \answerYes{Figures \ref{fig:gru_ipd} and \ref{fig:coin}.}
        \item Did you include the total amount of compute and the type of resources used (e.g., type of GPUs, internal cluster, or cloud provider)?
    \answerYes{Appendix \ref{appendix:compute}.}
\end{enumerate}

\item If you are using existing assets (e.g., code, data, models) or curating/releasing new assets...
\begin{enumerate}
  \item If your work uses existing assets, did you cite the creators?
    \answerYes{Details in Appendix \ref{appendix:code_details}.}
  \item Did you mention the license of the assets?
    \answerYes{Details in Appendix \ref{appendix:code_details}.}
  \item Did you include any new assets either in the supplemental material or as a URL?
    \answerYes{Github repo with code provided (\url{https://github.com/Silent-Zebra/POLA}).}
  \item Did you discuss whether and how consent was obtained from people whose data you're using/curating?
    \answerNA{The only ``data'' is rollouts from environments.}
  \item Did you discuss whether the data you are using/curating contains personally identifiable information or offensive content?
    \answerNA{The only ``data'' is rollouts from environments.}
\end{enumerate}

\item If you used crowdsourcing or conducted research with human subjects...
\begin{enumerate}
  \item Did you include the full text of instructions given to participants and screenshots, if applicable?
    \answerNA{}
  \item Did you describe any potential participant risks, with links to Institutional Review Board (IRB) approvals, if applicable?
    \answerNA{}
  \item Did you include the estimated hourly wage paid to participants and the total amount spent on participant compensation?
    \answerNA{}
\end{enumerate}

\end{enumerate}


\pagebreak

\appendix

\section{Appendix: Additional Background, Derivations, and Algorithm Details}

\subsection{LOLA with Direct Update}
\label{appendix:lola_update}

Throughout this paper, we interpret LOLA as directly differentiating through $L(\pi_{\theta^{1}}, \pi_{\theta^2 - \Delta \theta^2(\theta^1)})$; we provide an algorithm box in Algorithm \ref{algo:LOLA_update}.
We use this formulation of LOLA since it avoids using a first-order approximation around the objective as in \citet{foerster2018learning}, and forms the conceptual basis for LOLA-DiCE \citep{foerster2018dice}. In our experience, we find this formulation generally provides better and more consistent results, and is easier to implement.

\begin{algorithm}
\caption{LOLA direct update 2-agent formulation: exact gradient update for agent $1$}
\label{algo:LOLA_update}
\begin{algorithmic}
\STATE {\bfseries Input:} Policy parameters $\boldsymbol\theta = \{\theta^1, \theta^2\}$, learning rates $\alpha$, $\eta$
\STATE Initialize: $\boldsymbol\theta' \gets \boldsymbol\theta$
\STATE $ \theta^{2\prime} \gets \theta^{2\prime} - \eta \nabla_{\theta^{2\prime}} L^2(\pi_{\boldsymbol\theta'}) $ // Preserve Computation Graph 
\STATE $\theta^{1\prime} \gets \theta^1 - \alpha \nabla_{\theta_1'} L^1(\pi_{\boldsymbol\theta'}) $ // Differentiate through agent 2's update
\STATE {\bfseries Output:} $\theta^{1\prime}$
\end{algorithmic}
\end{algorithm}

\subsection{POLA Invariance Proof}
\label{appendix:invariance_proof}

\begin{proof}
Consider arbitrary $(\theta^{1a}, \theta^{1b}, \theta^{2a}, \theta^{2b} )$ such that $\pi_{\theta^{1a}} = \pi_{\theta^{1b}}$ and $\pi_{\theta^{2a}} = \pi_{\theta^{2b}}$, and consider $u(\theta^{1a}, \theta^{2a})$ and $u(\theta^{1b}, \theta^{2b})$. 
$$u(\theta^{1a}, \theta^{2a}) = 
\argmin\limits_{\theta^{1\prime\prime}} \left(L^1 (\pi_{\theta^{1\prime\prime}}, \pi_{\theta^{2\prime}(\theta^{1\prime\prime}, \theta^{2a})}) + \beta_\outer D(\pi_{\theta^{1a}} || \pi_{\theta^{1\prime\prime}}) \right)$$
$$u(\theta^{1b}, \theta^{2b}) = 
\argmin\limits_{\theta^{1\prime\prime}} \left(L^1 (\pi_{\theta^{1\prime\prime}}, \pi_{\theta^{2\prime}(\theta^{1\prime\prime}, \theta^{2b})}) + \beta_\outer D(\pi_{\theta^{1b}} || \pi_{\theta^{1\prime\prime}}) \right)$$

Recall again that we assume all $\argmin$ to be unique. From Equation \ref{eq:POLA_inner}:
$$\theta^{2\prime}(\theta^{1\prime\prime}, \theta^{2a}) =
\argmin\limits_{\theta^{2\prime\prime}} \left(L^2 (\pi_{\theta^{1\prime\prime}}, \pi_{\theta^{2\prime\prime}}  ) + \beta_\inner D(\pi_{\theta^{2a}} || \pi_{\theta^{2\prime\prime}}) \right)$$
$$\theta^{2\prime}(\theta^{1\prime\prime}, \theta^{2b}) =
\argmin\limits_{\theta^{2\prime\prime}} \left(L^2 (\pi_{\theta^{1\prime\prime}}, \pi_{\theta^{2\prime\prime}} ) + \beta_\inner D(\pi_{\theta^{2b}} || \pi_{\theta^{2\prime\prime}}) \right)$$
Since $\pi_{\theta^{2a}} = \pi_{\theta^{2b}}$, we get $\pi_{\theta^{2\prime}(\theta^{1\prime\prime},  \theta^{2a})} = \pi_{\theta^{2\prime}(\theta^{1\prime\prime}, \theta^{2b})}$ because:
$$
\argmin\limits_{\theta^{2\prime\prime}} \left(L^2 (\pi_{\theta^{1\prime\prime}}, \pi_{\theta^{2\prime\prime}}  ) + \beta_\inner D(\pi_{\theta^{2a}} || \pi_{\theta^{2\prime\prime}}) \right) = 
\argmin\limits_{\theta^{2\prime\prime}} \left(L^2 (\pi_{\theta^{1\prime\prime}}, \pi_{\theta^{2\prime\prime}} ) + \beta_\inner D(\pi_{\theta^{2b}} || \pi_{\theta^{2\prime\prime}}) \right) $$ 
Combining this with $\pi_{\theta^{1a}} = \pi_{\theta^{1b}}$, we get $u(\theta^{1a}, \theta^{2a}) = u(\theta^{1b}, \theta^{2b})$ because:
\begin{align*}
\argmin\limits_{\theta^{1\prime\prime}} \left(L^1 (\pi_{\theta^{1\prime\prime}}, \pi_{\theta^{2\prime}(\theta^{1\prime\prime}, \theta^{2a})}) + \beta_\outer D(\pi_{\theta^{1a}} || \pi_{\theta^{1\prime\prime}}) \right) = \\
\argmin\limits_{\theta^{1\prime\prime}} \left(L^1 (\pi_{\theta^{1\prime\prime}}, \pi_{\theta^{2\prime}(\theta^{1\prime\prime}, \theta^{2b})}) + \beta_\outer D(\pi_{\theta^{1b}} || \pi_{\theta^{1\prime\prime}}) \right)
\end{align*}
Thus, $\pi_{u(\theta^{1a}, \theta^{2a})} = \pi_{u(\theta^{1b}, \theta^{2b})} $, so $u$ is invariant to policy parameterization.
\end{proof}

\subsection{Proof of Connection Between POLA and LOLA}
\label{appendix:lola_as_prox}

Here we show that LOLA is a version of POLA that uses first-order approximations to all objectives and an L2 penalty on policy parameters rather than a divergence over policies. We assume the loss is differentiable so first-order approximations are well-defined, and that all $\argmin$ are unique.

\begin{lemma}
\label{lemma:lola_is_prox}
LOLA is equivalent to applying the proximal operator on a first-order approximation (around $\theta^1$) of the modified objective $L^1 (\pi_{\theta^1}, \pi_{\theta^2 - \Delta \theta^2(\theta^1)} ) $.
\end{lemma}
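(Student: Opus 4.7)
The plan is to unpack both sides and observe that they collapse to the same update via the identity already established in Section~\ref{background:prox:prelim}, namely $\prox_{\hat f_v^{(1)}}(v) = v - \lambda \nabla f(v)$. So the proof is essentially a matter of naming the right function $f$ and invoking this identity.

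\textbf{Step 1: Set up the objective.} Let $f(\theta^{1\prime\prime}) \triangleq L^1(\pi_{\theta^{1\prime\prime}}, \pi_{\theta^2 - \Delta \theta^2(\theta^{1\prime\prime})})$, treating $\theta^2$ as fixed and keeping the dependence of $\Delta \theta^2$ on its first-order LOLA argument $\theta^{1\prime\prime}$. The LOLA update from Algorithm~\ref{algo:LOLA_update} is then precisely $\theta^{1\prime} = \theta^1 - \alpha\, \nabla_{\theta^{1\prime\prime}} f(\theta^{1\prime\prime}) \big|_{\theta^{1\prime\prime} = \theta^1}$, where the gradient is the \emph{total} gradient that differentiates through $\Delta \theta^2(\theta^{1\prime\prime})$.

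\textbf{Step 2: Form the first-order approximation and apply the proximal operator.} The first-order approximation around $\theta^1$ is $\hat f^{(1)}_{\theta^1}(\theta^{1\prime\prime}) = f(\theta^1) + \nabla f(\theta^1)^T(\theta^{1\prime\prime} - \theta^1)$. Plugging into the proximal operator definition from Section~\ref{background:prox:prelim} with penalty parameter $\lambda = \alpha$ and the L2 norm on parameters gives
\begin{equation*}
\prox_{\alpha \hat f^{(1)}_{\theta^1}}(\theta^1) = \argmin_{\theta^{1\prime\prime}} \left( f(\theta^1) + \nabla f(\theta^1)^T(\theta^{1\prime\prime} - \theta^1) + \frac{1}{2\alpha}\|\theta^{1\prime\prime} - \theta^1\|_2^2 \right).
\end{equation*}

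\textbf{Step 3: Close the argument.} By the exact same calculation given in Section~\ref{background:prox:prelim}, this $\argmin$ has the closed form $\theta^1 - \alpha \nabla f(\theta^1)$, which is identical to the LOLA update identified in Step~1. The only subtlety to flag is that the gradient $\nabla f(\theta^1)$ is the \emph{total} derivative through $\Delta\theta^2(\cdot)$, matching LOLA's defining feature of differentiating through the opponent's one-step update; since $f$ was defined with that dependence explicit, no extra work is needed here.

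\textbf{Anticipated obstacle.} The only potentially fiddly point is making sure the reader sees that this reproduces LOLA and not naive learning, since a casual reading of the proximal formula might suggest we are only taking a gradient of $L^1$ in its first slot. The fix is to emphasize that the first-order approximation is taken of the whole composed function $\theta^{1\prime\prime} \mapsto L^1(\pi_{\theta^{1\prime\prime}}, \pi_{\theta^2 - \Delta \theta^2(\theta^{1\prime\prime})})$, so the chain rule through $\Delta \theta^2$ is already baked into $\nabla f(\theta^1)$. Apart from that, the proof is a one-line invocation of the identity in Section~\ref{background:prox:prelim}.
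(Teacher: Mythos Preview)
Your proposal is correct and follows essentially the same approach as the paper: define the composed objective $f(\theta^{1\prime\prime}) = L^1(\pi_{\theta^{1\prime\prime}}, \pi_{\theta^2 - \Delta\theta^2(\theta^{1\prime\prime})})$, take its first-order Taylor expansion around $\theta^1$, and invoke the identity $\prox_{\hat f_v^{(1)}}(v) = v - \lambda \nabla f(v)$ from Section~\ref{background:prox:prelim} to recover the LOLA gradient step. Your explicit emphasis that $\nabla f(\theta^1)$ is the \emph{total} derivative through $\Delta\theta^2(\cdot)$ is a helpful clarification that the paper leaves implicit.
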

\begin{proof} 
Let $f^1(\theta^1) = L^1 (\pi_{\theta^1}, \pi_{\theta^2 - \Delta \theta^2(\theta^1)} ) $.

Consider the first-order Taylor approximation  $\hat f_{\theta^1}^{1(1)}$ of the modified objective function (around $\theta^1$):\footnote{Note that this is different from the first-order approximation used in the original LOLA paper, as that approximation is a Taylor series expansion of $L^1 (\pi_{\theta^1}, \pi_{\theta^2}  )$, whereas here we use a Taylor series expansion of $L^1 (\pi_{\theta^1}, \pi_{\theta^2 - \Delta \theta^2(\theta^1)})$}
\begin{align*}
\hat f_{\theta^1}^{1(1)} ({\theta^{1\prime\prime}}) &= f^1({\theta^1}) + \nabla_{\theta^1} f^1({\theta^1})^T ({\theta^{1\prime\prime}}-{\theta^1}) 
\end{align*}

Applying the proximal operator to the above, and using the results from Section \ref{background:prox:prelim} (with $x = \theta^{1\prime\prime}$ and $v = \theta^1$), we have:
\begin{align*}
\prox_{\hat f_{\theta^1}^{1(1)}} (\theta^1) &=\argmin\limits_{\theta^{1\prime\prime}} \left(\hat f_{\theta^1}^{1(1)} ({\theta^{1\prime\prime}})+ \frac{1}{2\lambda} || \theta^1 - \theta^{1\prime\prime}||_2^2 \right) \\ 
&= \theta^1 - \lambda \nabla_{\theta^1} f^1(\theta^1) = \theta^1 - \lambda \nabla_{\theta^1} L^1 (\pi_{\theta^1}, \pi_{\theta^2 - \Delta \theta^2(\theta^1)}  )
\end{align*}

which recovers the gradient update that LOLA does. 
\end{proof}

\begin{lemma}
\label{lemma:inner_approx}
For any arbitrary fixed $\theta^1$, the naive gradient step in LOLA: $\theta^2 - \Delta \theta^2 = \theta^2 - \lambda \nabla_{\theta^2} L^2 (\pi_{\theta^1}, \pi_{\theta^2})$ 
is equivalent to applying the proximal operator on a first-order approximation (around $\theta^2$) of the objective  $L^2 (\pi_{\theta^1}, \pi_{\theta^2})$.
\end{lemma}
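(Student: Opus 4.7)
The plan is to mirror the argument from Lemma \ref{lemma:lola_is_prox} almost verbatim, since the statement has exactly the same shape but with the roles of agents swapped and with $\theta^1$ now playing the role of a fixed parameter rather than a differentiation variable. Concretely, I would begin by fixing $\theta^1$ and defining $f^2(\theta^2) = L^2(\pi_{\theta^1}, \pi_{\theta^2})$ as a function of $\theta^2$ alone. Differentiability of $L^2$ in $\theta^2$ (already assumed in the preamble to the appendix) ensures the first-order Taylor expansion around $\theta^2$ is well-defined:
\begin{equation*}
\hat f^{2(1)}_{\theta^2}(\theta^{2\prime\prime}) = f^2(\theta^2) + \nabla_{\theta^2} f^2(\theta^2)^T (\theta^{2\prime\prime} - \theta^2).
\end{equation*}

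Next, I would plug this affine function into the definition of the proximal operator from Section \ref{background:prox:prelim}, giving
\begin{equation*}
\prox_{\hat f^{2(1)}_{\theta^2}}(\theta^2) = \argmin_{\theta^{2\prime\prime}} \left( f^2(\theta^2) + \nabla_{\theta^2} f^2(\theta^2)^T (\theta^{2\prime\prime} - \theta^2) + \tfrac{1}{2\lambda} \|\theta^{2\prime\prime} - \theta^2\|_2^2 \right).
\end{equation*}
The $\argmin$ is unique by the strong convexity of the quadratic penalty (and by the uniqueness assumption already in force), and the standard derivation recalled in Section \ref{background:prox:prelim} with $x = \theta^{2\prime\prime}$ and $v = \theta^2$ yields $\theta^2 - \lambda \nabla_{\theta^2} f^2(\theta^2)$.

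Substituting back the definition of $f^2$ gives $\theta^2 - \lambda \nabla_{\theta^2} L^2(\pi_{\theta^1}, \pi_{\theta^2})$, which is exactly $\theta^2 - \Delta \theta^2$ with the learning rate identified as $\eta = \lambda$. I do not anticipate any real obstacle here: the only subtlety worth emphasising is that $\theta^1$ must be held fixed throughout, so the gradient $\nabla_{\theta^2} L^2$ is an ordinary (not total) derivative and no chain-rule terms through $\theta^1$ appear. This is precisely what makes the inner update a pure proximal-on-first-order-approximation step and distinguishes it from the outer update treated in Lemma \ref{lemma:lola_is_prox}, where the differentiation variable also enters through $\Delta\theta^2$.
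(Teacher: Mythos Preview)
Your proposal is correct and follows essentially the same approach as the paper's own proof: fix $\theta^1$, define $f^2(\theta^2) = L^2(\pi_{\theta^1}, \pi_{\theta^2})$, write its first-order Taylor expansion, apply the proximal operator, and invoke the identity from Section~\ref{background:prox:prelim} with $x = \theta^{2\prime\prime}$, $v = \theta^2$ to recover the gradient step. Your added remark about $\theta^1$ being held fixed so that no chain-rule terms appear is a helpful clarification but does not change the structure of the argument.
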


The proof is similar to the one above for Lemma \ref{lemma:lola_is_prox}:

\begin{proof}
Let $\theta^1$ be fixed.  Let $f^2({\theta^2}) = L^2 (\pi_{\theta^{1}}, \pi_{\theta^2  } )$.

Apply the first-order Taylor approximation $\hat f_{\theta^2}^{2(1)}$ of the objective function (around $\theta^2$), as in the following:
\begin{align*}
\hat f_{\theta^2}^{2(1)} ({\theta^{2\prime\prime}}) &= f^2({\theta^2}) + \nabla_{\theta^2} f^2({\theta^2})^T ({\theta^{2\prime\prime}}-{\theta^2})
\end{align*}

Applying the proximal operator to the above, and using the results from Section \ref{background:prox:prelim} (with $x = \theta^{2\prime\prime}$ and $v = \theta^2$), we have:
\begin{align*}
\prox_{\hat f_{\theta^2}^{2(1)}} (\theta^{2}) &=\argmin\limits_{\theta^{2\prime\prime}} \left(\hat f_{\theta^2}^{2(1)} ({\theta^{2\prime\prime}})+ \frac{1}{2\lambda} || \theta^2 - \theta^{2\prime\prime}||_2^2 \right) \\ 
&= \theta^2 - \lambda \nabla_{\theta^2} f^2(\theta^2) = \theta^2 - \lambda \nabla_{\theta^2} L^2(\pi_{\theta^1}, \pi_{\theta^2})  = \theta^2 - \Delta \theta^2
\end{align*}
thus recovering the gradient update.
\end{proof}

\begin{theorem}
\label{thm:pola-lola}
POLA reduces to LOLA when replacing the divergence on policies with the L2 norm on policy parameters and using a first order approximation of both agents' objectives.
\end{theorem}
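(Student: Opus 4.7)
The plan is to apply the two preceding lemmas in sequence — first to the inner minimization, then to the outer one — after swapping each policy divergence for an L2 penalty on parameters. Concretely, I would substitute $D(\pi_{\theta^i} \| \pi_{\theta^{i\prime\prime}})$ by $\frac{1}{2\lambda_i}\|\theta^i - \theta^{i\prime\prime}\|_2^2$ (absorbing $\beta_\inner, \beta_\outer$ into the step sizes $\lambda_2, \lambda_1$) in Equations \ref{eq:POLA_outer} and \ref{eq:POLA_inner}, then replace $L^2$ and $L^1$ by their first-order Taylor expansions around $\theta^2$ and $\theta^1$ respectively.

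First I would handle the inner step. Fixing $\theta^{1\prime\prime}$, Equation \ref{eq:POLA_inner} with the L2 parameter penalty and a first-order expansion of $L^2(\pi_{\theta^{1\prime\prime}}, \pi_{\theta^{2\prime\prime}})$ around $\theta^2$ is exactly the setup of Lemma \ref{lemma:inner_approx}. Applying that lemma gives
\begin{equation*}
\theta^{2\prime}(\theta^{1\prime\prime}, \theta^2) \;=\; \theta^2 - \lambda_2 \nabla_{\theta^2} L^2(\pi_{\theta^{1\prime\prime}}, \pi_{\theta^2}) \;=\; \theta^2 - \Delta \theta^2(\theta^{1\prime\prime}),
\end{equation*}
matching the naive learning step that LOLA assumes the opponent performs. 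Crucially, because this identity holds for every $\theta^{1\prime\prime}$ and the gradient is taken with respect to $\theta^2$ (not $\theta^{1\prime\prime}$), the functional dependence on $\theta^{1\prime\prime}$ is preserved, which is what LOLA differentiates through.

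Next I would plug this into the outer step. With the L2 penalty and the inner expression substituted in, Equation \ref{eq:POLA_outer} becomes
\begin{equation*}
\theta^{1\prime}(\theta^1, \theta^2) \;=\; \argmin_{\theta^{1\prime\prime}} \Bigl( L^1(\pi_{\theta^{1\prime\prime}}, \pi_{\theta^2 - \Delta\theta^2(\theta^{1\prime\prime})}) + \tfrac{1}{2\lambda_1}\|\theta^1 - \theta^{1\prime\prime}\|_2^2 \Bigr).
\end{equation*}
Taking a first-order expansion of $f^1(\theta^{1\prime\prime}) := L^1(\pi_{\theta^{1\prime\prime}}, \pi_{\theta^2 - \Delta\theta^2(\theta^{1\prime\prime})})$ around $\theta^1$ and applying Lemma \ref{lemma:lola_is_prox} produces $\theta^{1\prime} = \theta^1 - \lambda_1 \nabla_{\theta^1} L^1(\pi_{\theta^1}, \pi_{\theta^2 - \Delta\theta^2(\theta^1)})$, which is exactly the LOLA direct update of Algorithm \ref{algo:LOLA_update}. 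Setting $\lambda_1 = \alpha$ and $\lambda_2 = \eta$ identifies the step sizes.

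The step I expect to require the most care is the outer one: the first-order approximation must be applied to the composite $f^1$ after the inner update has been substituted in, not to $L^1(\pi_{\theta^{1\prime\prime}}, \pi_{\theta^2})$ alone, so that the chain-rule term $\nabla_{\theta^1}[\pi_{\theta^2 - \Delta\theta^2(\theta^1)}]$ appears and reproduces LOLA's differentiation through the opponent's update. Lemma \ref{lemma:lola_is_prox} already packages this observation, so the theorem follows by concatenating the two lemma applications and matching hyperparameters; uniqueness of the $\argmin$ (assumed throughout) is what lets us equate the proximal solution with the closed-form gradient step.
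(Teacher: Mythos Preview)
Your proposal is correct and follows essentially the same route as the paper: first apply Lemma~\ref{lemma:inner_approx} to collapse the inner problem to the naive gradient step $\theta^2 - \Delta\theta^2(\theta^{1\prime\prime})$, then apply Lemma~\ref{lemma:lola_is_prox} to the resulting outer problem to recover the LOLA update, matching $\beta_\inner,\beta_\outer$ to the step sizes. Your explicit remark that the outer first-order expansion must be taken on the composite $f^1(\theta^{1\prime\prime}) = L^1(\pi_{\theta^{1\prime\prime}}, \pi_{\theta^2 - \Delta\theta^2(\theta^{1\prime\prime})})$ is exactly the point encapsulated by Lemma~\ref{lemma:lola_is_prox}, and in fact makes this subtlety clearer than the paper's own phrasing.
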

\begin{proof}
Recall the expression for POLA (Equations \ref{eq:POLA_outer} and \ref{eq:POLA_inner}):
\begin{align*}
\theta^{1\prime}(\theta^{1}, \theta^2) &= \argmin\limits_{\theta^{1\prime\prime}} \left(L^1 (\pi_{\theta^{1\prime\prime}}, \pi_{\theta^{2\prime}(\theta^{1\prime\prime}, \theta^2)}) + \beta_\outer D(\pi_{\theta^1} || \pi_{\theta^{1\prime\prime}}) \right) \\
\theta^{2\prime}(\theta^{1\prime\prime}, \theta^2) &=
\argmin\limits_{\theta^{2\prime\prime}} \left(L^2 (\pi_{\theta^{1\prime\prime}}, \pi_{\theta^{2\prime\prime}}  ) + \beta_\inner D(\pi_{\theta^2} || \pi_{\theta^{2\prime\prime}}) \right)
\end{align*}
Replace $L^2 (\pi_{\theta^{1\prime\prime}}, \pi_{\theta^{2\prime\prime}}  ) $ with a first order approximation of $L^2(\pi_{\theta^{1\prime\prime}}, \pi_{\theta^{2}} )$ around $\theta^2$, replace $D(\pi_{\theta^2} || \pi_{\theta^{2\prime\prime}})$ with $|| \theta^2 - \theta^{2\prime\prime}||_2^2$, and set $\beta_\inner = \frac{1}{2\lambda}$. Then by Lemma \ref{lemma:inner_approx}, $\theta^{2\prime}(\theta^{1\prime\prime}, \theta^2) = \theta^{2} - \Delta \theta^2(\theta^{1\prime\prime})$, resulting in the following equation (which is Outer POLA in Section \ref{section:pola_outer}):
\begin{align*}
\theta^{1\prime}(\theta^{1}, \theta^2) = \argmin\limits_{\theta^{1\prime\prime}} \left(L^1 (\pi_{\theta^{1\prime\prime}}, \pi_{\theta^2 - \Delta \theta^2(\theta^{1\prime\prime})}  ) + \beta_\outer D(\pi_{\theta^1} || \pi_{\theta^{1\prime\prime}}) \right)
\end{align*}
Replace $D(\pi_{\theta^1} || \pi_{\theta^{1\prime\prime}})$ with $|| \theta^1 - \theta^{1\prime\prime}||_2^2$, set $\beta_\outer = \frac{1}{2\lambda}$, and replace $L^1 (\pi_{\theta^{1\prime\prime}}, \pi_{\theta^2 - \Delta \theta^2(\theta^{1\prime\prime})}  ) $ with the first order approximation of $L^1(\pi_{\theta^1}, \pi_{\theta^2})$ around $\theta^1$, $\hat f_{\theta^1}^{1(1)} ({\theta^{1\prime\prime}})$:
\begin{align*}
\theta^{1\prime}(\theta^{1}, \theta^2)  = \argmin\limits_{\theta^{1\prime\prime}} \left(\hat f_{\theta^1}^{1(1)} ({\theta^{1\prime\prime}})+ \frac{1}{2\lambda} || \theta^1 - \theta^{1\prime\prime}||_2^2 \right)
\end{align*}
This is exactly $\prox_{\hat f_{\theta^1}^{1(1)}} (\theta^1)$, so by Lemma \ref{lemma:lola_is_prox}, this reduces to LOLA.
\end{proof}

\subsection{LOLA-DiCE}
\label{appendix:dice}

DiCE \citep{foerster2018dice} introduces an infinitely differentiable estimator for unbiased higher-order Monte Carlo gradient estimates. DiCE introduces a new operator $\epsdice{2}$ which operates on a set of stochastic nodes $\mathcal{W}$, where: 
\begin{equation}
\label{eq:dice_op}
\epsdice{2}(\mathcal{W}) = \exp(\tau - \bot(\tau)) \text{ and } 
\tau = \sum\limits_{w \in \mathcal{W}} \log p(w;\vtheta).
\end{equation}
$\bot$ is a stop-gradient operator (detach in Pytorch).
The loss for agent $1$, on a single rollout using policies ($\pi_{\theta^1}, \pi_{\theta^{2\prime}}$), with LOLA-DiCE is:
\begin{equation}
\label{eq:DiCE}
\mathcal{L}_{\epsdice{2}(\pi_{\theta^1}, \pi_{\theta^{2\prime}})}^1 = - \sum\limits_{t=0}^T \epsdice{2}({a}_{\leq t}) \gamma^t r_t^1
\end{equation}
where ${a}_{\leq t}$ denotes the set of actions all agents took at time step $t$ or earlier.
Algorithm \ref{algo:LOLA-DiCE} provides pseudo-code for LOLA-DiCE.

\begin{algorithm}
\caption{LOLA-DiCE 2-agent formulation: update for agent $1$}
\label{algo:LOLA-DiCE}
\begin{algorithmic}
\STATE {\bfseries Input:} Policy parameters $\boldsymbol\theta = \{\theta^1, \theta^2\}$, learning rates $\alpha, \eta$, number of inner steps $K$
\STATE Initialize: $\boldsymbol\theta' \gets \boldsymbol\theta$
\FOR {$k$ in $1...K$}
\STATE Rollout trajectories 
using policies $(\pi_{\theta^1}, \pi_{\theta^{2\prime}})$
\STATE $\theta^{2\prime} \gets \theta^{2\prime} - \eta \nabla_{\theta^{2\prime} } \mathcal{L}_{\epsdice{2}(\pi_{\theta^1}, \pi_{\theta^{2\prime}})}^2$
\ENDFOR
\STATE Rollout trajectories 
using policies $(\pi_{\theta^1}, \pi_{\theta^{2\prime}})$
\STATE $\theta^{1\prime} \gets \theta^1 - \alpha \nabla_{\theta^1} \mathcal{L}^1_{\epsdice{2}(\pi_{\theta^1}, \pi_{\theta^{2\prime}})}$
\STATE {\bfseries Output:} $\theta^{1\prime}$
\end{algorithmic}
\end{algorithm}

LOLA-DiCE \citep{foerster2018dice} replicated the original LOLA policy gradient results \citep{foerster2018learning} in a way that was more direct, efficient, and stable, supported by experiments with tabular policies in the one-step memory IPD.

\subsection{Loaded DiCE}
\label{appendix:loaded_dice}

\textit{Loaded DiCE} \citep{farquhar2019loaded} rewrites the DiCE objective (\ref{eq:DiCE}) as: 
\begin{equation}
\label{eq:loaded}
\mathcal{L}_{\epsdice{2}(\pi_{\theta^1}, \pi_{\theta^{2\prime}})}^1  =  - \sum\limits_{t=0}^T \gamma^t \left(\epsdice{2}({a}_{\leq t}) - \epsdice{2}({a}_{< t}) \right) \sum\limits_{l=t}^T \gamma^{l-t} r_l^1
\end{equation}
where ${a}_{< t}$ denotes the set of actions all agents took before time step $t$.

\citet{farquhar2019loaded} showed Equation \ref{eq:loaded} has the same gradients as Equation \ref{eq:DiCE}. 
$\sum\limits_{l=t}^T \gamma^{l-t} r_l^1$ in Equation \ref{eq:loaded} is then replaced with an advantage function: $A^1(s_t, a_t)$. Thus, \textit{loaded DiCE} incorporates a baseline for variance reduction with DiCE. 

In all our experiments with rollouts (Sections \ref{section:full_hist_ipd} and \ref{section:coin}), we use \textit{loaded DiCE} (Equation \ref{eq:loaded}) with generalized advantage estimation (see Appendix \ref{appendix:GAE} and \citet{schulman2015high})
for the baseline.

\subsection{Generalized Advantage Estimation}
\label{appendix:GAE}

Generalized advantage estimation \citep{schulman2015high}, introduces the following advantage estimator for time step $t$:
\begin{align*}
&\hat A_t^{GAE(\gamma, \lambda)} \triangleq (1 - \lambda) (\hat A_t^{(1)} + \lambda \hat A_t^{(2)} + \lambda^2 \hat A_t^{(3)} + ...) \\
&\hat A_t^{(k)} \triangleq \sum\limits_{l=0}^{k-1} \gamma^l \delta_{t+l}^V 
= -V(s_t) + r_t + \gamma r_{t+1} + ... + \gamma^{k-1} r_{t+k-1} + \gamma^k V(s_{t+k})
\end{align*}
When $\lambda = 0$, the advantage estimator is the one-step Bellman residual, whereas $\lambda = 1$ is equivalent to Monte Carlo estimation (in the finite horizon setting, it extrapolates the Monte Carlo estimation with the final state value). In our finite-horizon experiments, we use a finite sum:
\begin{align*}
&\hat A_t^{GAE(\gamma, \lambda, T)} \triangleq (1 - \lambda) \sum\limits_{t'=1}^T   \lambda^{t' - 1} \hat A_t^{(t')}
\end{align*}
which is the same as what \textit{loaded DiCE} does \citep{farquhar2019loaded}. 

When the value function $V$ is not completely accurate, the advantage estimator is biased. Using a lower value of $\lambda$ increases bias, but reduces variance.

\subsection{Training the Critic}

Advantage estimation requires a value function (critic) $V$. To train the critic, we minimize the mean squared error of the Monte Carlo return extended by the value in the final state:
\begin{align*}
\mathcal{L}_{critic} 
= 
\sum\limits_{t=0}^T ([-V(s_t)] + [r_t + \gamma r_{t+1} + ... + \gamma^{T-t-1} r_{T-1} + \gamma^{T-t} V(s_{T})]) ^2 
\end{align*}
averaged over samples in the batch.

\subsection{POLA-DiCE with Repeated Training on the Same Samples}
\label{appendix:pola_dice_repeat_train}

POLA-DiCE can require a lot of environment rollouts. One idea for improving sample efficiency and reducing training time is developing a modification like  \citet{schulman2017proximal} or \citet{kakade2002approximately} that allows for repeated training on the same set of samples.

We define a new operator similar to (\ref{eq:dice_op}) from LOLA-DiCE:
\begin{equation}
\epsdice{3}(\mathcal{W}) = \exp(\tau' - \bot(\tau')) \text{ and } 
\tau' = \sum\limits_{w \in \mathcal{W}} p(w;\vtheta) / p(w;\vtheta_{old}) 
\end{equation}
where $\vtheta_{old}$ denotes policy parameters used on the first rollout (step $k = 1$ for the inner loop and step $m = 1$ for the outer loop). 

Our new loss for agent $1$ is:
$$\mathcal{L}_{\epsdice{3}(\pi_{\theta^1}, \pi_{\theta^{2\prime}})}^1 = \sum\limits_{t=0}^T \left(\epsdice{3}({a}_{\leq t}) - \epsdice{3}({a}_{< t}) \right) \gamma^t  A^1(s_t, a_t)$$
This is the \textit{loaded DiCE} formulation (\ref{eq:loaded}), but with probability ratios instead of log probabilities, which lets us make multiple updates on the same batch of rollouts.
We provide pseudo-code in Algorithm \ref{algo:pola_dice_ppo}; we call this POLA-DiCE-PPO.

\begin{algorithm}
\caption{POLA-DiCE-PPO 2-agent formulation: update for agent $1$}
\label{algo:pola_dice_ppo}
\begin{algorithmic}
\STATE {\bfseries Input:} Policy parameters $\theta^1, \theta^2$, learning rates $\alpha_1, \alpha_2$, penalty hyperparameters $\beta_\inner, \beta_\outer$, number of outer steps $M$ and inner steps $K$
\STATE Initialize: $\theta^{1\prime\prime} \gets \theta^1$
\FOR {$m$ in $1...M$}
\STATE Initialize: $\theta^{2\prime\prime} \gets \theta^2$
\FOR {$k$ in $1...K$}
\IF {k = 1}
\STATE Rollout trajectories 
using policies $(\pi_{\theta^{1\prime\prime}}, \pi_{\theta^{2\prime\prime}})$. Save states $s^{\text{in}}_{\leq T}$ from the trajectories
\ENDIF
\STATE $\theta^{2\prime\prime} \gets \theta^{2\prime\prime} - \alpha_2 \nabla_{\theta^{2\prime\prime} } 
( \mathcal{L}_{\epsdice{3}(\pi_{\theta^{1\prime\prime}}, \pi_{\theta^{2\prime\prime}})}^2  + \beta_\inner D(\pi_{\theta^2}, \pi_{\theta^{2\prime\prime}} | s^{\text{in}}_{\leq T}) )
$
\ENDFOR
\STATE Rollout trajectories with states $s^{\text{out}}_{\leq T}$
using policies $(\pi_{\theta^{1\prime\prime}}, \pi_{\theta^{2\prime\prime}})$
\STATE $\theta^{1\prime\prime} \gets \theta^{1\prime\prime} - \alpha_1 \nabla_{\theta^{1\prime\prime}} (\mathcal{L}^1_{\epsdice{3}(\pi_{\theta^{1\prime\prime}}, \pi_{\theta^{2\prime\prime}})} + \beta_\outer D(\pi_{\theta^1}, \pi_{\theta^{1\prime\prime}} | s^{\text{out}}_{\leq T}))$
\ENDFOR
\STATE {\bfseries Output:} $\theta^{1\prime\prime}$
\end{algorithmic}
\end{algorithm}
On the first update step ($m=1$ and $k=1$), $\theta_\old = \bot(\theta)$, so
$\nabla_{\theta^{2\prime\prime} } \mathcal{L}_{\epsdice{3}(\pi_{\theta^{1\prime\prime}}, \pi_{\theta^{2\prime\prime}})}^2 = \nabla_{\theta^{2\prime\prime} } \mathcal{L}_{\epsdice{2}(\pi_{\theta^{1\prime\prime}}, \pi_{\theta^{2\prime\prime}})}^2$ 
and 
$\nabla_{\theta^{1\prime\prime}} \mathcal{L}^1_{\epsdice{3}(\pi_{\theta^{1\prime\prime}}, \pi_{\theta^{2\prime\prime}})} = \nabla_{\theta^{1\prime\prime}} \mathcal{L}^1_{\epsdice{2}(\pi_{\theta^{1\prime\prime}}, \pi_{\theta^{2\prime\prime}})}$.
Thus, when inner steps $K = 1$, outer steps $M = 1$, and $\beta_\inner, \beta_\outer = 0$, POLA-DiCE-PPO is equivalent to LOLA-DiCE. 

Algorithm \ref{algo:pola_dice_ppo} repeats training only on the inner loop. A similar repeated training procedure can be used on the outer loop, for even greater sample efficiency. However, this works only if the higher order gradients calculated from the initial rollout remain accurate as the policy changes.

We present this section only in the Appendix because we empirically found a larger number of outer steps to be more important than a larger number of inner steps, and that repeatedly training on the same samples for the outer loop resulted in learning reciprocity-based cooperation significantly less consistently. In future work, modifications such as periodic rollouts every $x < M$ outer steps may provide improved sample efficiency with less deterioration in performance.

This objective can also be clipped, like the clipped version of PPO \citep{schulman2017proximal}; future work could explore this and compare to the KL penalty version.

\subsection{POLA N-Agent Formulation}
\label{appendix:pola_N}

Consider again agent 1's perspective. Agent 1 solves for the following policy:
\begin{equation}
\label{eq:POLA_outer_N}
\theta^{1\prime}(\theta^1, \theta^2, ..., \theta^N) = 
\argmin\limits_{\theta^{1\prime\prime}} \left(L^1 (\pi_{\theta^{1\prime\prime}}, \pi_{\theta^{2\prime}(\theta^{1\prime\prime}, \theta^2, ..., \theta^N)}, ... , \pi_{\theta^{N\prime}(\theta^{1\prime\prime}, \theta^2, ... , \theta^N)} ) + \beta_\outer D(\pi_{\theta^1} || \pi_{\theta^{1\prime\prime}}) \right)
\end{equation}
where $D(\pi_{\theta^i} || \pi_{\theta^{i\prime\prime}})$ is again shorthand for a general divergence defined on policies.
For $\theta^{2\prime}, ... , \theta^{N\prime}$ in Equation \ref{eq:POLA_outer_N}, we choose the following proximal updates:
\begin{equation*}
\label{eq:POLA_inner_N}
    \begin{aligned}
    \theta^{2\prime}(\theta^{1\prime\prime}, \theta^2, ... , \theta^N) &=
    \argmin\limits_{\theta^{2\prime\prime}} \left(L^2 (\pi_{\theta^{1\prime\prime}}, \pi_{\theta^{2\prime\prime}}, \pi_{\theta^{3}} ... ,  \pi_{\theta^{N}} ) + \beta_\inner D(\pi_{\theta^2} || \pi_{\theta^{2\prime\prime}}) \right) \\
    & \vdots \\
    \theta^{N\prime}(\theta^{1\prime\prime}, \theta^2, ... , \theta^N) &=
    \argmin\limits_{\theta^{N\prime\prime}} \left(L^N (\pi_{\theta^{1\prime\prime}}, \pi_{\theta^{2}}, ... , \pi_{\theta^{N-1}} ,  \pi_{\theta^{N\prime\prime}} ) + \beta_\inner D(\pi_{\theta^N} || \pi_{\theta^{N\prime\prime}}) \right)
    \end{aligned}
\end{equation*}

In short, agent 1 assumes all other agents $i$ find the argmin of their loss, assuming the policies of other agents $j \neq i$ are fixed (using agent 1’s updated policy $\theta^{1\prime\prime}$ and the original policies of all other agents).

\subsection{POLA-DiCE N-Agent Formulation}
\label{appendix:pola_N_dice}

Let $\pi_{\theta^{-1, -i}} \triangleq \{\pi_{\theta^2}, ..., \pi_{\theta^N} \} \setminus \pi_{\theta^i} $ be shorthand for all of the policies except those of agent 1 and agent $i$. Algorithm \ref{algo:pola_dice_N} provides an N-agent formulation of POLA-DiCE. The idea is similar to Appendix \ref{appendix:pola_N}; in the inner loop, each agent updates assuming other agents' policies are fixed.

\begin{algorithm}
\caption{POLA-DiCE N-agent formulation: update for agent $1$}
\label{algo:pola_dice_N}
\begin{algorithmic}
\STATE {\bfseries Input:} Policy parameters $\theta^1, \theta^2, ..., \theta^N$, learning rates $\alpha_1, \alpha_2, ... , \alpha_N$, penalty hyperparameters $\beta_{\inner}^2, ... , \beta_{\inner}^N$, $\beta_{\outer}$, number of outer steps $M$ and inner steps $K$
\STATE Initialize: $\theta^{1\prime\prime} \gets \theta^1$
\FOR {$m$ in $1...M$}
\FOR {$i$ in $2...N$} 
\STATE Initialize: $\theta^{i\prime\prime} \gets \theta^i$
\FOR {$k$ in $1...K$}
\STATE Rollout trajectories with states $s^{\text{in}}_{\leq T}$
using policies $(\pi_{\theta^{1\prime\prime}}, \pi_{\theta^{i\prime\prime}}, \pi_{\theta^{-1, -i}})$
\STATE $\theta^{i\prime\prime} \gets \theta^{i\prime\prime} - \alpha_i \nabla_{\theta^{i\prime\prime} } 
( \mathcal{L}_{\epsdice{2}(\pi_{\theta^{1\prime\prime}}, \pi_{\theta^{i\prime\prime}}, \pi_{\theta^{-1, -i}})}^i  + \beta_{\inner}^i D(\pi_{\theta^i}, \pi_{\theta^{i\prime\prime}} | s^{\text{in}}_{\leq T}) )
$
\ENDFOR
\ENDFOR
\STATE Rollout trajectories with states $s^{\text{out}}_{\leq T}$
using policies $(\pi_{\theta^{1\prime\prime}}, \pi_{\theta^{2\prime\prime}}, ..., \pi_{\theta^{N\prime\prime}})$
\STATE $\theta^{1\prime\prime} \gets \theta^{1\prime\prime} - \alpha_1 \nabla_{\theta^{1\prime\prime}} (\mathcal{L}^1_{\epsdice{2}(\pi_{\theta^{1\prime\prime}}, \pi_{\theta^{2\prime\prime}}, ..., \pi_{\theta^{N\prime\prime}})} + \beta_{\outer} D(\pi_{\theta^1}, \pi_{\theta^{1\prime\prime}} | s^{\text{out}}_{\leq T}))$
\ENDFOR
\STATE {\bfseries Output:} $\theta^{1\prime\prime}$
\end{algorithmic}
\end{algorithm}

\section{Appendix: Experiment and Hyperparameter Details}
\label{appendix:expmt_hparams}

\subsection{One-Step Memory IPD}

\subsubsection{Exact Loss Calculation}
\label{appendix:exact_loss_ipd}

With one step of memory, we can directly build the transition probability matrix $\mathcal{P}$ and starting state distribution $\textbf{p}_0$ given all agents' policies. \citet{foerster2018learning} in their Appendix A.2 derived the exact
loss: $L^i = - \textbf{p}_0^T (I - \gamma \mathcal{P})^{-1} R^i$.
This can then be directly used for gradient updates.

\subsubsection{Example Showing how the Cooperation Factor Recovers the IPD Reward Structure}
\label{appendix:cf_details}

Below we show that $f = 4 / 3$ recovers the IPD from \citet{foerster2018learning}:

Let $M$ be the number of agents who cooperate, $D(M)$ be the payoff to each defector, and $C(M)$ be the payoff to each cooperator. Then $C(2) = 1/3$, $C(1) = -1/3$, $D(1) = 2/3$, $D(0) = 0$, and we have the payoff matrix:
\begin{center}
\begin{tabular}{c|c|c}
P1/P2 & C & D \\
 \hline
C & (1/3, 1/3) & (-1/3, 2/3)   \\
\hline
D & (2/3, -1/3)  & (0, 0)
\end{tabular}  
\end{center}
Multiplying all rewards by 3 (equivalent to scaling the learning rate), subtracting 2 from all rewards (which preserves the ordering of policies by expected reward, and is equivalent under an accurately learned value function baseline), then gives the payoff matrix:
\begin{center}
\begin{tabular}{c|c|c}
P1/P2 & C & D \\
 \hline
C & (-1, -1) & (-3, 0)   \\
\hline
D & (0, -3)  & (-2, -2)
\end{tabular}
\end{center}

which is the 2-player IPD with the reward structure given in \citet{foerster2018learning}.

\subsubsection{Function Approximation Setup}
\label{appendix:ipd-one-funcapprox-state}

With function approximation, our state representation is a one-hot vector with 3 dimensions (defect, cooperate, and start state) for each agent's past action. Thus, with two agents, each agent's input is two 3-d one-hot vectors, which we flatten to a single 6-d vector.
We use this representation as it has size $\Theta(N)$, whereas
a single one-hot vector over all possible combinations of actions has size $\Theta(2^N)$; our representation is conducive to future experiments with large $N$.

\subsubsection{Hyperparameter Settings}
\label{appendix:ipd1_hparams}

Typical neural network weight initializations (e.g. Gaussian) produce policies that are close to random (cooperating with probability close to 0.5 in each state). Our policy probability initializations are close to random throughout; this helps provide comparable results between tabular and neural network policies.
Naive learning can find TFT if initialized sufficiently close to it, but never finds TFT with policies initialized close to random.

Empirically, for policies initialized close to random and for sufficiently large $\eta$, LOLA always updates the policy toward cooperating \textit{iff} the opponent last cooperated. Thus, with sufficiently large learning rate $\alpha$, LOLA finds TFT in the tabular setting (for $f > 1$). However, LOLA no longer finds TFT with large learning rates when the policy parameterization is a neural network function approximator or a transformed tabular policy (Figure \ref{fig:all_comparison}).

For Figure \ref{fig:all_comparison}, we show results from the best set of hyperparameters (highest \% TFT found). We tuned inner and outer learning rates ($\eta, \alpha$) for LOLA using a greedy heuristic; we increased or decreased values until we no longer got better results. For \textit{outer POLA}, we tuned $\eta$ and $\beta_\outer$, and did less tuning on the outer learning rate, which matters only for convergence speed and stability. For the exact set of hyperparameters, see: \url{https://github.com/Silent-Zebra/POLA}.

\subsubsection{Additional Detailed IPD Results}
\label{appendix:ipd-one-detailed_results}

\begin{table}
  \caption{Average Policies (Probability of Cooperation by State) Learned in IPD for Various Policy Parameterizations}
  \label{table:detailed_avg_policies}
  \centering
  \begin{tabular}{lllllll}
    \toprule
    Algorithm   & Parameterization     & DD     & DC  & CD & CC & Start \\
     & (Self \& Opponent)  \\
    \midrule

    \multicolumn{7}{c}{Contribution Factor 1.1}                   \\
    \midrule
    Naive Learning & Tabular & 0.00 & 0.06 & 0.06 & 0.16 & 0.01 \\
    LOLA & Tabular & 0.00 & 1.00 & 0.00 & 1.00 & 1.00 \\
    LOLA & Neural net & 0.00 & 0.01 & 0.00 & 0.03 & 0.02 \\
    LOLA & Pre-condition & 0.00 & 0.00 & 0.97 & 0.00 & 0.00 \\
    Outer POLA &  Tabular & 0.00 & 0.97 & 0.45 & 1.00 & 0.94 \\
    Outer POLA & Neural net & 0.00 & 0.94 & 0.49 & 1.00 & 0.94 \\
    Outer POLA & Pre-condition & 0.01 & 0.29 & 0.26 & 0.87 & 0.60 \\
    \midrule
    \multicolumn{7}{c}{Contribution Factor 1.25}                   \\
    \midrule
    Naive Learning & Tabular  & 0.00 & 0.07 & 0.07 & 0.18 & 0.02 \\ 
    LOLA & Tabular & 0.00 & 1.00 & 0.00 & 1.00 & 1.00 \\
    LOLA & Neural net & 0.00 & 0.02 & 0.01 & 0.06 & 0.02 \\
    LOLA & Pre-condition & 0.00 & 0.00 & 0.96 & 0.00 & 0.00 \\
    Outer POLA & Tabular & 0.08 & 0.97 & 0.18 & 1.00 & 0.95 \\
    Outer POLA & Neural net & 0.01 & 0.87 & 0.47 & 1.00 & 0.85 \\
    Outer POLA & Pre-condition & 0.12 & 0.97 & 0.23 & 1.00 & 0.77 \\
    \midrule
    \multicolumn{7}{c}{Contribution Factor 1.33}                   \\
    \midrule
    Naive Learning & Tabular & 0.00 & 0.07 & 0.07 & 0.19 & 0.02 \\ 
    LOLA & Tabular & 0.00 & 1.00 & 0.00 & 1.00 & 1.00 \\
    LOLA & Neural net & 0.03 & 0.35 & 0.06 & 0.41 & 0.15 \\
    LOLA & Pre-condition & 0.00 & 0.00 & 0.96 & 0.00 & 0.00 \\
    Outer POLA & Tabular & 0.13 & 0.96 & 0.08 & 1.00 & 0.94 \\
    Outer POLA & Neural net & 0.02 & 0.85 & 0.45 & 0.99 & 0.68 \\
    Outer POLA & Pre-condition & 0.18 & 0.99 & 0.30 & 1.00 & 0.76 \\
    \midrule
    \multicolumn{7}{c}{Contribution Factor 1.4}                   \\
    \midrule
    Naive Learning & Tabular & 0.00 & 0.07 & 0.07 & 0.20 & 0.02 \\
    LOLA & Tabular & 0.00 & 1.00 & 0.00 & 1.00 & 1.00 \\
    LOLA & Neural net & 0.25 & 0.87 & 0.37 & 0.88 & 0.65 \\
    LOLA & Pre-condition & 0.00 & 0.00 & 0.96 & 0.00 & 0.00 \\
    Outer POLA & Tabular & 0.13 & 0.96 & 0.12 & 1.00 & 0.95 \\
    Outer POLA & Neural net & 0.02 & 0.87 & 0.44 & 0.99 & 0.76 \\
    Outer POLA & Pre-condition & 0.21 & 0.98 & 0.33 & 1.00 & 0.75 \\
    \midrule
    \multicolumn{7}{c}{Contribution Factor 1.6}                   \\
    \midrule
    Naive Learning & Tabular & 0.00 & 0.09 & 0.09 & 0.23 & 0.05 \\
    LOLA & Tabular  & 0.00 & 1.00 & 0.00 & 1.00 & 1.00 \\
    LOLA & Neural net & 0.15 & 0.98 & 0.41 & 0.97 & 0.67 \\
    LOLA & Pre-condition & 0.10 & 0.10 & 0.87 & 0.10 & 0.10 \\
    Outer POLA & Tabular & 0.05 & 0.90 & 0.07 & 1.00 & 0.91 \\
    Outer POLA & Neural net & 0.03 & 0.95 & 0.28 & 0.99 & 0.87 \\
    Outer POLA & Pre-condition & 0.08 & 0.98 & 0.20 & 1.00 & 1.00 \\
    \bottomrule
  \end{tabular}
\end{table}

In Table \ref{table:detailed_avg_policies}, we show the average probability of cooperation in each state of the IPD for each of the algorithms and contribution factors shown in Figure \ref{fig:all_comparison}. Here, DD denotes both agents last defected, DC denotes the agent last defected while the opponent last cooperated, CD denotes the agent last cooperated while the opponent last defected, CC denotes both agents last cooperated, and Start is the starting state. Numbers are averaged over 20 runs and averaged over both agents. \textit{Outer POLA} learns reciprocity-based cooperation across contribution factor and policy parameterization settings much more consistently than LOLA.

\subsubsection{IPD Experiments with Varying Opponent Model Parameterizations}
\label{appendix:ipd-om}

We revisit the IPD with opponent modeling, considering agents with tabular policies but varying parameterizations of opponent models: tabular, neural network function approximation, and pre-conditioned tabular models. These parameterizations are the same as described previously except for the pre-conditioned tabular model, for which we now use:
\begin{align*}
\textbf{Q}^1 = 
\begin{psmallmatrix}
1 & 0 & 0 & 0 & 0 \\
-2 & 1 & 0 & 0 & 0 \\
-2 & 0 & 1  & 0 & 0 \\
-2 & 0  & 0  & 1 & 0\\
-2 & 0  & 0 & 0 & 1
\end{psmallmatrix}
,
\textbf{Q}^2 = 
\begin{psmallmatrix}
1 & 0 & 0 & 0 & 0 \\
-2 & 1 & 0 & 0 & 0 \\
-2 & 0 & 1  & 0 & 0 \\
-2 & 0  & 0  & 1 & 0\\
-2 & 0  & 0 & 0 & 1
\end{psmallmatrix}
\end{align*}
This set of transformations again only changes basis and lets us illustrate the difference between LOLA and POLA. 
We show results in Table \ref{table:detailed_avg_policies_om}. LOLA fails to learn reciprocity-based cooperation for the pre-conditioned tabular opponent model, whereas POLA learns reciprocity-based cooperation across all parameterizations.

\begin{table}
  \caption{Average Policies (Probability of Cooperation by State) Learned in IPD for Various Opponent Model Policy Parameterizations}
  \label{table:detailed_avg_policies_om}
  \centering
  \begin{tabular}{lllllll}
    \toprule
    Algorithm   & Parameterization     & DD     & DC  & CD & CC & Start \\
     & (Opponent Model)  \\
    \midrule

    \multicolumn{7}{c}{Contribution Factor 1.33}                   \\
    \midrule
    LOLA & Tabular & 0.02 & 0.99 & 0.17 & 1.00 & 0.93 \\
    LOLA & Neural net & 0.00 & 0.99 & 0.03 & 1.00 & 0.97 \\
    LOLA & Pre-condition & 0.00 & 0.10 & 0.07 & 0.23 & 0.08 \\
    POLA &  Tabular & 0.01 & 0.97 & 0.05 & 1.00 & 0.97 \\
    POLA & Neural net & 0.01 & 0.97 & 0.05 & 1.00 & 0.97 \\
    POLA & Pre-condition & 0.12 & 0.94 & 0.02 & 1.00 & 0.85 \\
    \bottomrule
  \end{tabular}
\end{table}

LOLA learns reciprocity-based cooperation well even with a neural net opponent policy. We believe this is because the inner player policy update (usually towards defecting in each state) is less sensitive to policy parameterization than the outer player policy update which requires second order gradients and learning reciprocity. 

The version of POLA we use here is similar to POLA-DiCE (Algorithm \ref{algo:pola_dice}) except with exact losses and the uniform distribution KL penalty; we provide pseudocode in Algorithm \ref{algo:pola_approx}. 

\begin{algorithm}
\caption{POLA direct approximation 2-agent formulation: update for agent $1$}
\label{algo:pola_approx}
\begin{algorithmic}
\STATE {\bfseries Input:} Policy parameters $\theta^1, \theta^2$, learning rates $\alpha_1, \alpha_2$, penalty hyperparameters $\beta_\inner, \beta_\outer$, number of outer steps $M$ and inner steps $K$
\STATE Initialize: $\theta^{1\prime\prime} \gets \theta^1$
\FOR {$m$ in $1...M$}
\STATE Initialize: $\theta^{2\prime\prime} \gets \theta^2$
\FOR {$k$ in $1...K$}
\STATE $\theta^{2\prime\prime} \gets \theta^{2\prime\prime} - \alpha_2 \nabla_{\theta^{2\prime\prime} } 
( L^2 (\pi_{\theta^{1\prime\prime}}, \pi_{\theta^{2\prime\prime}})  + \beta_{\inner} (\E_{s \sim U(\S)} [D_{KL}(\pi_{\theta^2}(s) || \pi_{\theta^{2\prime\prime}} (s))]) )
$
\ENDFOR
\STATE $\theta^{1\prime\prime} \gets \theta^{1\prime\prime} - \alpha_1 \nabla_{\theta^{1\prime\prime}} ({L}^1(\pi_{\theta^{1\prime\prime}}, \pi_{\theta^{2\prime\prime}}) + \beta_{\outer} (\E_{s \sim U(\S)} [D_{KL}(\pi_{\theta^1}(s) || \pi_{\theta^{1\prime\prime}} (s))]))$
\ENDFOR
\STATE {\bfseries Output:} $\theta^{1\prime\prime}$
\end{algorithmic}
\end{algorithm}

We use 100-200 inner steps for POLA and 1 outer step; since we are only changing the opponent model parameterization, invariance on the inner loop is most important. This is a fairly small number of steps which does not achieve full invariance, causing results to vary slightly across parameterizations. We assume unlimited batch size for learning the opponent model, which is equivalent to directly learning from the policy, to separate the effects of noise from opponent modeling. Detailed hyperparameters are available in the codebase (\url{https://github.com/Silent-Zebra/POLA}).

\subsection{IPD Full History Details}
\label{appendix:full_hist_ipd}

We use the state representation as discussed in Appendix \ref{appendix:ipd-one-funcapprox-state}. Agents condition actions on the entire state history up to the current time step. Policies are parameterized by a fully connected input layer with 64 hidden units, a ReLU nonlinearity, a GRU cell, and a fully connected output layer. We use a batch size of 2000 (parallel environment rollouts), discount rate $\gamma = 0.96$, and rollout for $T=50$ steps in the environment. For both LOLA-DiCE and POLA-DiCE, we use a simple gradient step on the inner loop and the Adam optimizer with default betas on the outer loop, as LOLA-DiCE \citep{foerster2018dice} does.

For all algorithms, we use \textit{loaded DiCE} (Appendix \ref{appendix:loaded_dice}) with GAE (Appendix \ref{appendix:GAE}) with $\lambda = 1$, an outer learning rate of 0.003, and a learning rate for the critic (value function) of 0.0005. For LOLA-DiCE, we use an inner learning rate of 0.05, but we also tried the values: [0.005, 0.015, 0.02, 0.03, 0.07, 0.1, 0.2] and got similar or worse results. We also tried a few settings with lower and higher outer learning rates and value function learning rates, and 2 inner steps; none learned reciprocity-based cooperation more consistently.

For POLA-DiCE, we use 2 inner steps and 200 outer steps, with $\beta_\inner = 10$ and $\beta_\outer = 100$, and an inner learning rate of 0.005. Results are not very sensitive to $\beta_\outer$; we got similar results with $\beta_\outer = 30$ and $\beta_\outer = 200$. We did not extensively tune hyperparameters for POLA-DiCE in this setting, so it is likely that similar results can be reproduced with fewer outer steps. Results are more sensitive to the inner learning rate and $\beta_\inner$, but we expect using more inner steps would lessen this sensitivity. One challenge with taking more inner steps is the memory requirement, which forces a tradeoff with batch size. We update the critic after each policy update on both the inner and outer loop (for the corresponding agent or opponent model). 

To learn the opponent model in POLA-OM, we use 200 environment rollouts (of 2000 batch size each) between each set of POLA updates. The opponent model architecture is the same as the agent's. We learn the opponent's value function in the same way as the agent's own value function, but with the reward and states of the other agent. We use learning rates of 0.005 for the policy model and 0.0005 for the value model. For other hyperparameters, we use the same settings for POLA-DiCE with and without opponent modeling.

In Figure \ref{fig:gru_ipd} we choose the number of outer steps as the x-axis because the outer steps are policy updates that are actually made; inner step updates are not saved, and are used only in the gradient calculation of outer step updates. Each inner step currently requires an environment rollout, though this can be mitigated in future work (e.g. Appendix \ref{appendix:pola_dice_repeat_train}), another reason why we consider outer steps more representative of sample efficiency. Strictly comparing environment rollouts would horizontally stretch the lines for POLA-DiCE relative to LOLA-DiCE by a factor of 1.5 (3 environment rollouts per outer step for POLA-DiCE vs. 2 for LOLA-DiCE); this would not change the conclusions drawn in the paper.

\subsection{Coin Game Details}
\label{appendix:coin}

Our environment implementation adheres to Figure 3 in \citet{foerster2018learning} where two agents that step on the same coin at the same time both collect the coin. In previous experiments, splitting the coin 50-50 between agents gave very similar results.

The LOLA results in Figure \ref{fig:coin} cannot be directly compared with those in \citet{foerster2018learning} for several reasons. Our implementation of the coin game environment fixes bugs such as ties always being broken in favour of the red agent (see: \href{https://github.com/alshedivat/lola/issues/9}{https://github.com/alshedivat/lola/issues/9}), which make the original results irreproducible. We use LOLA-DiCE instead of the original LOLA-PG formulation. We rollout for fewer steps to reduce computation time and memory requirements. Our policy parameterization has more hidden units and uses a GRU, which should make it more expressive than the originally used RNN, and may also make optimization more difficult.

Same as the IPD with full history, 
our GRUs use 64 hidden units with a fully connected input layer with ReLU nonlinearity and a linear output layer. We use a batch size of 2000 (parallel environment rollouts), discount rate $\gamma = 0.96$, and rollout for $T=50$ steps in the environment. For both LOLA-DiCE and POLA-DiCE, we use a simple gradient step on the inner loop and the Adam optimizer with default betas on the outer loop, as LOLA-DiCE \citep{foerster2018dice} does.

For all algorithms, we use \textit{loaded DiCE} (Appendix \ref{appendix:loaded_dice}) with GAE (Appendix \ref{appendix:GAE}) with $\lambda = 1$, an outer learning rate of 0.003, and a learning rate for the critic (value function) of 0.0005. 
For LOLA-DiCE, we use an inner learning rate of 0.003, but we also tried the values: [0.001, 0.005, 0.01, 0.02, 0.05, 0.1, 0.2] and got similar results. We also tried a few settings with lower and higher outer learning rates and value function learning rates, and 2 inner steps; none learned reciprocity-based cooperation more consistently.

For POLA-DiCE, we use 2 inner steps and 200 outer steps, with $\beta_\inner = 5$ and $\beta_\outer = 150$, and an inner learning rate of 0.02.
We update the critic after each policy update on both the inner and outer loop (for the corresponding agent or opponent model).

To learn the opponent model in POLA-OM, we use 200 environment rollouts (of 2000 batch size each) between each set of POLA updates. The opponent model architecture is the same as the agent's. We learn the opponent's value function in the same way as the agent's own value function, but with the reward and states of the other agent. We use learning rates of 0.005 for the policy model and 0.0005 for the value model. For POLA-OM, we use 4 inner steps with $\beta_\inner = 10$ and an inner learning rate of 0.01; other hyperparameters are the same as POLA-DiCE without opponent modeling. The additional inner steps and higher $\beta_\inner$ provide greater invariance to the opponent model, which helps in the coin game. We tried a few settings with even more inner steps but those did not learn reciprocity-based cooperation more consistently. We believe this may be due to memory constraints forcing smaller batch sizes (and thus more noise from environment rollouts) with more inner steps. 

In Figure \ref{fig:coin} we choose the number of outer steps as the x-axis because the outer steps are policy updates that are actually made; inner step updates are not saved, and are used only in the gradient calculation of outer step updates. Each inner step currently requires an environment rollout, though this can be mitigated in future work (e.g. Appendix \ref{appendix:pola_dice_repeat_train}), another reason why we consider outer steps more representative of sample efficiency. Strictly comparing environment rollouts would horizontally stretch the lines for POLA-DiCE relative to LOLA-DiCE by a factor of 1.5 (3 environment rollouts per outer step for POLA-DiCE vs. 2 for LOLA-DiCE); this would not change the conclusions drawn in the paper.

\subsection{Code Details}
\label{appendix:code_details}

Parts of code were adapted from \url{https://github.com/alexis-jacq/LOLA_DiCE} \citep{foerster2018dice} and \url{https://github.com/aletcher/stable-opponent-shaping} \citep{letcher2018stable}. Both use the MIT license, which grants permission free of charge for subsequent use, modification, and distribution. 

\subsection{Compute Usage}
\label{appendix:compute}

For the IPD with one-step memory (Section \ref{section:hist_one_ipd}), experiments were run on CPUs provided free of charge by Google Colaboratory. Most experiments required only a small amount of compute (taking minutes to run).

For the IPD with full history (Section \ref{section:full_hist_ipd}) and the coin game (Section \ref{section:coin}), experiments were run on GPUs on an internal cluster. GPUs were either NVIDIA Tesla T4 or NVIDIA Tesla P100. Coin game experiments took around 1 full day (24 hours) to run for 1 seed on 1 GPU, whereas the IPD with full history experiments took around 8-10 hours for each seed.

The total amount of compute used, including during the experimentation phase, was significantly higher than that used for Figures \ref{fig:gru_ipd} and \ref{fig:coin}. 

\section{Societal Impact}
\label{section:impact}

We do not anticipate any immediate societal impact from this work; at the time of writing, there is no direct real world application. That said, we hope this work helps produce socially beneficial outcomes when autonomous learning agents interact, which is critical for future real-world deployment. However, while opponent shaping helps in the social dilemma settings we tested, it could cause undesirable consequences in other settings. For example, pricing algorithms learning reciprocity-based cooperation would be tantamount to collusion. In such cases, POLA could learn undesirable behaviour in a way that is invariant to policy parameterization.
Overall though, we expect the potential positive impact of our work to outweigh the potential negative impact.

\end{document}